\newtheorem{theorem}{Theorem}
\newtheorem{definition}{Definition}
\newtheorem{lemma}{Lemma}
\newtheorem{corollary}{Corollary}
\newtheorem{assumption}{Assumption}
\def\hW{\hat\cW}
\def\eee#1#2{\mathbb{E}_{#1}\left[#2\right]}
\def\vw{\mathbf{w}}
\def\vx{\mathbf{x}}
\def\vu{\mathbf{u}}
\def\vv{\mathbf{v}}
\def\vg{\mathbf{g}}
\def\ve{\mathbf{e}}
\def\vf{\mathbf{f}}
\def\vbeta{\boldsymbol{\beta}}
\def\valpha{\boldsymbol{\alpha}}
\def\cN{\mathcal{N}}
\def\cW{\mathcal{W}}
\def\cB{\mathcal{B}}
\def\cD{\mathcal{D}}
\def\cR{\mathcal{R}}
\def\cO{\mathcal{O}}
\def\vzero{\boldsymbol{0}}
\def\t{\intercal}
\def\rr{\mathbb{R}}
\def\pr{\mathbb{P}}
\icmltitlerunning{Student Specialization in Deep Rectified Networks With Finite Width and Input Dimension}
\begin{document}

\twocolumn[
\icmltitle{Student Specialization in Deep Rectified Networks With Finite Width and Input Dimension}

\begin{icmlauthorlist}
\icmlauthor{Yuandong Tian}{yuandong}
\end{icmlauthorlist}

\icmlaffiliation{yuandong}{Facebook AI Research}

\icmlcorrespondingauthor{Yuandong Tian}{yuandong@fb.com}

\icmlkeywords{Machine Learning, ICML}

\vskip 0.3in
]

\printAffiliationsAndNotice{}

\begin{abstract}
We consider a deep ReLU / Leaky ReLU student network trained from the output of a fixed teacher network of the same depth, with Stochastic Gradient Descent (SGD). The student network is \emph{over-realized}: at each layer $l$, the number $n_l$ of student nodes is more than that ($m_l$) of teacher. Under mild conditions on dataset and teacher network, we prove that when the gradient is small at every data sample, each teacher node is \emph{specialized} by at least one student node \emph{at the lowest layer}. For two-layer network, such specialization can be achieved by training on any dataset of \emph{polynomial} size $\mathcal{O}( K^{5/2} d^3 \epsilon^{-1})$. until the gradient magnitude drops to $\mathcal{O}(\epsilon/K^{3/2}\sqrt{d})$. Here $d$ is the input dimension, $K = m_1 + n_1$ is the total number of neurons in the lowest layer of teacher and student. Note that we require a specific form of data augmentation and the sample complexity includes the additional data generated from augmentation. To our best knowledge, we are the first to give polynomial sample complexity for student specialization of training two-layer (Leaky) ReLU networks with finite depth and width in teacher-student setting, and finite complexity for the lowest layer specialization in multi-layer case, without parametric assumption of the input (like Gaussian). Our theory suggests that teacher nodes with large fan-out weights get specialized first when the gradient is still large, while others are specialized with small gradient, which suggests inductive bias in training. This shapes the stage of training as empirically observed in multiple previous works. Experiments on synthetic and CIFAR10 verify our findings. The code is released in \url{https://github.com/facebookresearch/luckmatters}. 
\end{abstract}

\section{Introduction}
While Deep Learning has achieved great success in different empirical fields~\citep{alphago,resnet,bert}, it remains an open question how such networks can generalize to new data. As shown by empirical studies (e.g.,~\cite{rethinking}), for deep models, training on real or random labels might lead to very different generalization behaviors. Without any assumption on the dataset, the generalization bound can be vacuous, i.e., the same network with zero training error can either generalize well or perform randomly in the test set. 

One way to impose that is via \emph{teacher-student} setting: given $N$ input samples, a fixed teacher network provides the label for a student to learn. The setting has a long history~\citep{Gardner_1989} and offers multiple benefits. First, while worst-case performance on arbitrary data distributions may not be a good model for real structured dataset and can be hard to analyze, using a teacher network implicitly enforces an inductive bias and could potentially lead to better generalization bound. Second, the existence of teacher is often guaranteed by expressiblility (e.g., even one-hidden layer can fit any function~\citep{hornik1989multilayer}). Finally, a reference network could facilitate and deepen our understanding of the training procedure.

Specialization, i.e., a student node becomes increasingly correlated with a teacher node during training~\citep{saad1996dynamics}, is one important topic in this setup. If all student nodes are specialized to the teacher, then student tends to output the same as the teacher and generalization performance can be expected. Empirically, it has been observed in 2-layer networks~\citep{saad1996dynamics,goldt2019dynamics} and multi-layer networks~\citep{tian2019luck,li2015convergent}, in both synthetic and real dataset. In contrast, theoretical analysis is limited with strong assumptions (e.g., Gaussian inputs, infinite input dimension, local convergence, 2-layer setting, small number of hidden nodes). 

In this paper, we analyze student specialization when both teacher and student are deep ReLU / Leaky ReLU~\cite{maas2013rectifier} networks. Similar to~\citep{goldt2019dynamics}, the student is \emph{over-realized} compared to the teacher: at each layer $l$, the number $n_l$ of student nodes is larger than the number $m_l$ of teacher ($n_l \ge m_l$). Although over-realization is different from \emph{over-parameterization}, i.e., the total number of parameters in the student model is larger than the training set size $N$, over-realization directly correlates with the width of networks and is a measure of over-parameterization. With finite input dimension, we show rigorously that when gradient at each training sample is small (i.e., the interpolation setting as suggested in~\citep{ma2017power,liu2018mass,bassily2018exponential}), student nodes~\emph{specialize} to teacher nodes \emph{at the lowest layer}: \textbf{each teacher node is aligned with at least one student node}. This explains one-to-many mapping between teacher and student nodes and the existence of un-specialized student nodes, as observed empirically in~\citep{saad1996dynamics}.

Our setting is more relaxed than previous works. \textbf{(1)} While statistical mechanics approaches~\citep{saad1996dynamics,goldt2019dynamics,Gardner_1989,aubin2018committee} assume both the training set size $N$ and the input dimension $d$ goes to infinite (i.e., the thermodynamics limits) and assume Gaussian inputs, our analysis allows finite $d$ and $N$, and impose \emph{no} parametric constraints on the input data distribution. \textbf{(2)} While Neural Tangent Kernel~\citep{jacot2018neural,du2018gradient} and mean-field approaches~\citep{mei2018mean} requires infinite (or very large) width, our setting applies to finite width as long as student is slightly over-realized ($n_l \ge m_l$). \textbf{(3)} While recent works~\cite{hu2020optimal} show convergence in classification for teacher-student setting when $N$ grows exponentially w.r.t. number of teacher nodes (including 2-layer case), we address student specialization in regression problems and show polynomial sample complexity in 2-layer case. 

We verify our findings with numerical experiments. For deep ReLU nodes, we show one-to-many specialization and existence of un-specialized nodes at each hidden layer, on both synthetic dataset and CIFAR10. We also perform ablation studies about the effect of student over-realization and how strong teacher nodes learn first compared to the weak ones, as suggested by our theory. 

\section{Related Works}
\textbf{Student-teacher setting}. This setting has a long history~\citep{engel2001statistical,Gardner_1989,saad1996dynamics,saad1995line,freeman1997online,mace1998statistical} and recently gains increasing interest~\citep{goldt2019dynamics,aubin2018committee} in analyzing 2-layered network. The seminar works~\citep{saad1996dynamics,saad1995line} studies 1-hidden layer case from statistical mechanics point of view in which the input dimension goes to infinity, or so-called \emph{thermodynamics limits}. They study symmetric solutions and locally analyze the symmetric breaking behavior and onset of \emph{specialization} of the student nodes towards the teacher. Recent follow-up works~\citep{goldt2019dynamics} makes the analysis rigorous and empirically shows that random initialization and training with SGD indeed gives student specialization in 1-hidden layer case, which is consistent with our experiments. With the same assumption, ~\citep{aubin2018committee} studies phase transition property of specialization in 2-layer networks with small number of hidden nodes using replica formula. In these works, inputs are assumed to be Gaussian and step or Gauss error function is used as nonlinearity. Few works study teacher-student setting with more than two layers.~\citep{allen2018learning} shows the recovery results for 2 and 3 layer networks, with modified SGD, batchsize 1 and heavy over-parameterization. 

In comparison, our work shows that specialization happens around SGD critical points in the lowest layer for deep ReLU networks, without any parametric assumptions of input distribution. 

\textbf{Local minima is Global}. While in deep linear network, all local minima are global~\citep{laurent2018deep, kawaguchi2016deep}, situations are quite complicated with nonlinear activations. While local minima is global when the network has invertible activation function and distinct training samples~\citep{nguyen2017loss,yun2018global} or Leaky ReLU with linear separate input data~\citep{laurent2017multilinear}, multiple works~\citep{du2017gradient, ge2017learning, safran2017spurious,yun2018small} show that in GD case with population or empirical loss, spurious local minima can happen even in two-layered network. Many are specific to two-layer and hard to generalize to multi-layer setting. In contrast, our work brings about a generic formulation for deep ReLU network and gives recovery properties in the student-teacher setting.  

\textbf{Learning very wide networks}. Recent works on Neural Tangent Kernel~\citep{jacot2018neural,du2018gradient,pmlr-v97-allen-zhu19a} show the global convergence of GD for multi-layer networks with infinite width. ~\citep{li2018learning} shows the convergence in one-hidden layer ReLU network using GD/SGD to solution with good generalization, when the input data are assumed to be clustered into classes. Both lines of work assume heavily over-parameterized network, requiring polynomial growth of number of nodes with respect to the number of samples. ~\citep{chizat2018global} shows global convergence of over-parameterized network with optimal transport. ~\citep{tian2019luck} assumes mild over-realization and gives convergence results for 2-layer network when a subset of the student network is close to the teacher. Our work extends it to multilayer cases with much weaker assumptions.

\section{Mathematical Framework}
\label{sec:framework}

\def\cleaky{c_\mathrm{leaky}}

\begin{figure*}
    \centering
    \includegraphics[width=0.85\textwidth]{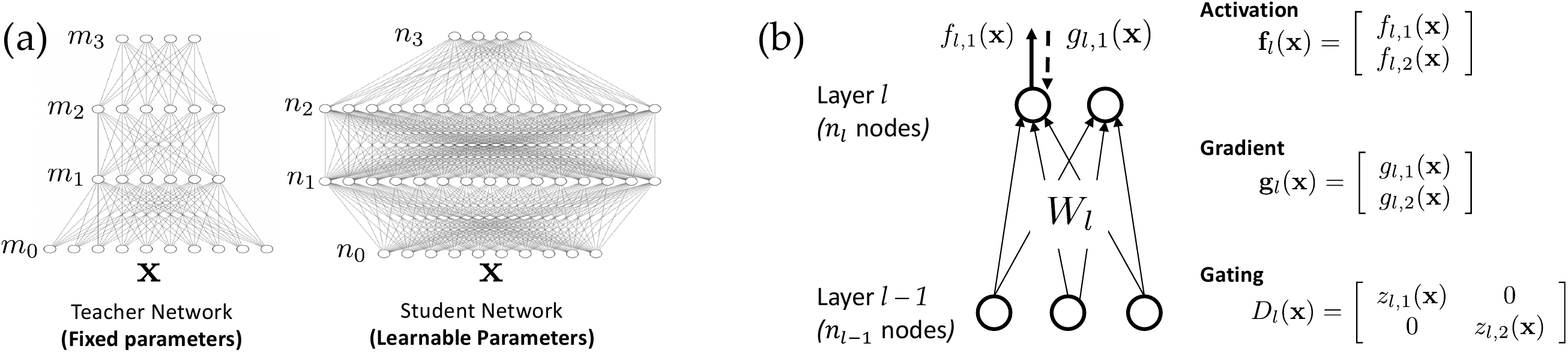}
    \vspace{-0.1in}
    \caption{\small{Problem Setup. \textbf{(a)} Student-teacher setting. The student network learns from the output of a fixed teacher network via stochastic gradient descent (SGD). \textbf{(b)} Notations. All low cases are scalar, bold lower case are column vectors (row vectors are always with a transpose) and upper cases are matrices.}}
    \vspace{-0.1in}
    \label{fig:setup}
\end{figure*}

\textbf{Notation}. Consider a student network and its associated teacher network (Fig.~\ref{fig:setup}(a)). Denote the input as $\vx$. We focus on multi-layered networks with $\sigma(\cdot)$ as ReLU / Leaky ReLU~\cite{maas2013rectifier} nonlinearity. We use the following equality extensively: $\sigma(x) = \mathbb{I}[x \ge 0]x + \mathbb{I}[x < 0] \cleaky x$, where $\mathbb{I}[\cdot]$ is the indicator function and $\cleaky$ is the leaky ReLU constant ($\cleaky=0$ for ReLU). For node $j$, $f_j(\vx)$, $z_j(\vx)$ and $g_j(\vx)$ are its activation, gating function and backpropagated gradient \emph{after the gating}. 

Both teacher and student networks have $L$ layers. The input layer is layer 0 and the topmost layer (layer that is closest to the output) is layer $L$. For layer $l$, let $m_l$ be the number of teacher node while $n_l$ be the number of student node. The weights $W_l\in \rr^{ (n_{l-1} + 1) \times n_l}$ refers to the weight matrix that connects layer $l-1$ to layer $l$ on the student side, with bias terms included. $W_l = [\vw_{l,1}, \vw_{l,2}, \ldots, \vw_{l,n_l}]$ where each $\vw = [\tilde\vw; b]\in \rr^{n_{l-1} + 1}$ is the weight vector. Here $\tilde\vw$ is the weight and $b$ is the bias. 

Without loss of generality, we assume teacher weights $\vw = [\tilde\vw, b] \in \rr^{d+1}$ are \emph{regular}, except for the topmost layer $l=L$.
\begin{definition}[Regular Weight]
A weight vector $\vw = [\tilde\vw, b]$ is \emph{regular} if $\|\tilde\vw\|_2 = 1$.
\end{definition}

Let $\vf_l(\vx) = [f_{l,1}(\vx), \ldots, f_{l,n_l}(\vx), 1]^\t \in \rr^{n_l+1}$ be the activation vector of layer $l$, $D_l(\vx) = \mathrm{diag}[z_{l,1}(\vx), \ldots, z_{l,n_l}(\vx), 1] \in \rr^{(n_l+1)\times (n_l+1)}$ be the diagonal matrix of gating function (for ReLU it is either 0 or 1), and $\vg_l(\vx) = [g_{l,1}(\vx), \ldots, g_{l,n_l}(\vx), 1]^\t \in \rr^{n_l+1}$ be the backpropated gradient vector. The last $1$s are for bias. By definition, $\vf_0(\vx) = \vx \in \rr^{n_0}$ is the input and $m_0 = n_0$. Note that $\vf_l(\vx)$, $\vg_l(\vx)$ and $D_l(\vx)$ are all dependent on $\cW$. For brevity, we often use $\vf_l(\vx)$ rather than $\vf_l(\vx; \cW)$.

All notations with superscript $^*$ are from the teacher, only dependent on the teacher and remains the same throughout the training. For the topmost layer, $D^*_L(\vx) = D_L(\vx) \equiv I_{C\times C}$ since there is no ReLU gating, where $C$ is the dimension of output for both teacher and student. With the notation, gradient descent is:
\begin{equation}
    \dot W_l = \eee{\vx}{\vf_{l-1}(\vx)\vg^\t_l(\vx)} \label{eq:gradient-rule}
\end{equation}
In SGD, the expectation $\eee{\vx}{\cdot}$ is taken over a batch. In GD, it is over the entire dataset.

Let $E_j = \{\vx: f_j(\vx) > 0\}$ the activation region of node $j$ and $\partial E_j = \{\vx: f_j(\vx) = 0\}$ its decision boundary. 

\textbf{MSE Loss}. We use the output of teacher as the supervision:
\begin{equation}
    \min_{\cW} J(\cW) = \frac{1}{2}\eee{\vx}{\|\vf^*_L(\vx) - \vf_L(\vx)\|^2} \label{eq:objective}
\end{equation}

For convenience, we also define the following qualities:
\begin{definition}
\label{def:V-alpha-beta}
Define $V_l \in \rr^{C\times n_l}$ and $V^*_l\in \rr^{C\times m_l}$ in a top-down manner (for top-most layer $V_L = V^*_L := I_{C\times C}$):
\begin{equation}
    V_{l-1} := V_lD_lW^\t_l, \quad V^*_{l-1} := V^*_lD^*_l W^{*\t}_l \label{eq:v-recursive}
\end{equation}
We further define $A_l := V^\t_l V^*_l \in \rr^{n_l\times m_l}$ and $B_l = V_l^\t V_l\in \rr^{n_l\times n_l}$. Each element of $A_l$ is $\alpha^l_{jj'} = \vv^\t_j\vv^*_{j'}$. Similarly, each element of $B_l$ is $\beta^l_{jj'} = \vv^\t_j\vv_{j'}$.
\end{definition}

In this paper, we want to know whether \emph{the student nodes specialize to teacher nodes at the same layers}  during training? We define student node specialization as follows:
\begin{definition}[$\epsilon$-aligned] 
Two nodes $j$ and $j'$ are aligned if their weights $[\tilde\vw_j, b_j]$ and $[\tilde\vw_{j'}, b_{j'}]$ satisfy:
\begin{equation}
\sin\tilde\theta_{jj'} \le \epsilon, \quad\quad |b_j - b_{j'}| \le \epsilon,
\end{equation}
where $\tilde\theta_{jj'}$ is the angle between $\tilde\vw_j$ and $\tilde\vw_{j'}$.
\end{definition}
One might wonder this is hard since the student's intermediate layer receives no \emph{direct supervision} from the corresponding teacher layer, but relies only on backpropagated gradient. Surprisingly, Lemma~\ref{thm:same-layer-relationship} shows that the supervision is implicitly carried from layer to layer via gradient:

\begin{lemma}[Recursive Gradient Rule]
\label{thm:same-layer-relationship}
    At layer $l$, the backpropagated $\vg_l(\vx)$ satisfies
    \begin{equation}
        \vg_l(\vx) = D_l(\vx)\left[A_l(\vx) \vf^*_l(\vx) - B_l(\vx) \vf_l(\vx) \right] \label{eq:compatibility}
    \end{equation}
\end{lemma}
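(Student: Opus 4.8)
The plan is to prove the identity by downward induction on the layer index $l$, descending from the top layer $L$, using two ingredients: the elementary backpropagation recursion for $\vg_l(\vx)$, and the fact that $V_l$ and $V^*_l$ are exactly the effective linear maps carrying a layer's activations to the network output. For the base case $l=L$ there is no gating, so $D_L = I$, and by Definition~\ref{def:V-alpha-beta} we have $V_L = V^*_L = I$, hence $A_L = V_L^\t V^*_L = I$ and $B_L = V_L^\t V_L = I$. Differentiating the per-sample MSE loss $\frac{1}{2}\|\vf^*_L(\vx) - \vf_L(\vx)\|^2$ and matching against the descent rule \eqref{eq:gradient-rule} identifies the top-layer signal as $\vg_L(\vx) = \vf^*_L(\vx) - \vf_L(\vx)$, which is precisely $D_L[A_L\vf^*_L - B_L\vf_L]$, so the base case holds.

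For the inductive step I would first record the two one-step structural facts that drive everything. From the forward passes the student satisfies $\vf_{l+1} = D_{l+1}W^\t_{l+1}\vf_l$ and the teacher $\vf^*_{l+1} = D^*_{l+1}W^{*\t}_{l+1}\vf^*_l$; combined with the recursions \eqref{eq:v-recursive} these telescope into the \emph{invariance identities}
\begin{equation}
V_{l+1}\vf_{l+1} = V_l\vf_l, \qquad V^*_{l+1}\vf^*_{l+1} = V^*_l\vf^*_l. \label{eq:invariance}
\end{equation}
Transposing the definition $V_l = V_{l+1}D_{l+1}W^\t_{l+1}$ (and using that $D_{l+1}$ is diagonal, hence symmetric) gives the second fact, $W_{l+1}D_{l+1}V^\t_{l+1} = V^\t_l$.

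The heart of the argument is the standard chain-rule recursion for the gradient after gating, $\vg_l = D_l W_{l+1}\vg_{l+1}$, where the leading $D_l$ comes from differentiating through the activation. Substituting the inductive hypothesis $\vg_{l+1} = D_{l+1}[A_{l+1}\vf^*_{l+1} - B_{l+1}\vf_{l+1}]$, expanding $A_{l+1} = V^\t_{l+1}V^*_{l+1}$ and $B_{l+1} = V^\t_{l+1}V_{l+1}$, and collecting the factor $W_{l+1}D_{l+1}V^\t_{l+1} = V^\t_l$ yields
\begin{equation}
\vg_l = D_l V^\t_l\left[V^*_{l+1}\vf^*_{l+1} - V_{l+1}\vf_{l+1}\right].
\end{equation}
Applying the invariance identities \eqref{eq:invariance} turns the bracket into $V^*_l\vf^*_l - V_l\vf_l$, and distributing $V^\t_l$ reproduces $D_l[A_l\vf^*_l - B_l\vf_l]$, closing the induction.

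I expect the only real obstacle to be bookkeeping rather than ideas: the augmented vectors $\vf_l$, $\vg_l$ and the matrix $D_l$ each carry an extra coordinate for the bias while $W_{l+1}$ does not, so the backprop recursion and the forward relations must be written with the bias row/column padded consistently for all the displayed matrix products to be dimensionally valid. Once the augmentation convention is fixed (the constant bias coordinate is preserved by the trailing $1$ in $D_l$ and never backpropagates into $W_{l+1}$), every step above is a direct substitution. A secondary point to verify is the sign convention: the rule \eqref{eq:gradient-rule} is written with a positive expectation, so one must confirm that $\vg_L = \vf^*_L - \vf_L$ is the correct residual, which then fixes the signs throughout.
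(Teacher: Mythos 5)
Your proposal is correct and follows essentially the same route as the paper's own proof: downward induction from the top layer with base case $\vg_L = \vf^*_L - \vf_L$, the backprop recursion $\vg_{l} = D_{l}W_{l+1}\vg_{l+1}$, and the recursive definition of $V_l$, $V^*_l$ combined with the forward relations to collapse the bracket to $V^*_l\vf^*_l - V_l\vf_l$. The ``invariance identities'' you isolate are exactly the underbraced substitutions the paper performs inline, so the two arguments coincide up to an index shift and bookkeeping of the bias coordinate.
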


\textbf{Remark}. Lemma~\ref{thm:same-layer-relationship} applies to arbitrarily deep ReLU networks and allows $n_l \neq m_l$. In particular, student can be over-realized. Note that $A_l$, $B_l$, $V_l$, $V_l^*$ all depends on $\vx$. Due to the property of (Leaky) ReLU, $A_l(\vx)$ and $B_l(\vx)$ are piecewise constant functions (Corollary~\ref{co:piece-wise-constant} in Appendix). 

\textbf{Relationship to Network Distillation.} Our setting is closely related but different from network distillation extensively used in practice. From our point of view, both the ``pre-trained teacher'' and ``condensed student'' in network distillation are large student networks, and the dataset are samples from an inaccessible and small teacher network, i.e., the oracle. We will study connections to network distillation in our future works.

\section{Simple Example: Two-layer Network, Zero Gradient and Infinite Samples}
\label{sec:simple-example}

\begin{figure*}
    \centering
    \includegraphics[width=0.95\textwidth]{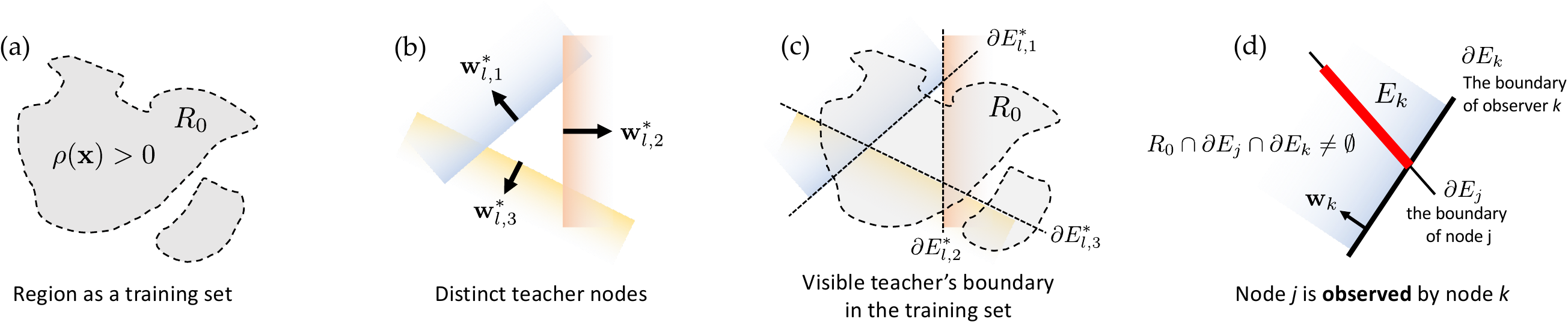}
    \vspace{-0.1in}
    \caption{\small{Simple Example (Sec.~\ref{sec:simple-example}). \textbf{(a-c)} Assumption~\ref{assumption:noise-free}: Training set is an \emph{open} region $R_0$ in the input space. Teacher nodes ($l = 1$) are distinct. Teacher boundaries are \emph{visible} in $R_0$. Here $\partial E^*_{l,j} \cap R_0 \neq \emptyset$ for $j = 1, 2, 3$. \textbf{(d)} The definition of observer (Definition~\ref{def:observer}).}}
    \vspace{-0.1in}
    \label{fig:assumptions-infinite}
\end{figure*}

We first consider a two-layer ReLU network trained with infinite samples until the gradient is zero at every training sample. This ideal case reveals key structures of student specialization with intuitive proof. In 2-layer case, $A_1(\vx)$ and $B_1(\vx)$ are constant with respect to $\vx$, since there is no ReLU gating at the top layer $l=2$. The subscript $l$ is omit for brevity. Sec.~\ref{sec:main-theorem} proposes main theorems that consider finite sample, small gradient and multi-layer networks.

Obviously, some teacher networks cannot be reconstructed, e.g., a teacher network that outputs identically zero. Therefore, some assumptions on teacher network are needed. 

\begin{assumption}
\label{assumption:noise-free}
(1) We have an infinite training set $R_0$, an \emph{open} set around the origin. (2) Any two teacher weights are not co-linear; (3) The boundary of any teacher node $j$ intersects with $R_0$: $R_0 \cap \partial E^*_j \neq \emptyset$. See Fig.~\ref{fig:assumptions-infinite}.
\end{assumption}
Intuitively, we want the teacher nodes to be well-separated, and all boundaries of teachers pass through a dataset, which reveals their nonlinear nature. ``Open set'' means $R$ has interior and is \emph{full rank}. 

\begin{definition}[Observer]
\label{def:observer}
Node $k$ is an observer of node $j$ if $R_0 \cap E_k\cap \partial E_j \neq \emptyset$. See Fig.~\ref{fig:assumptions-infinite}(d).
\end{definition}

We assume the following \emph{zero gradient condition}, which is feasible since student network is over-realized. 
\begin{assumption}[Zero Batch Gradient in SGD]
\label{assumption:zero-gradient-per-batch}
For every batch $\cB\subseteq R_0$, $\dot W_l = \eee{\vx \in \cB}{\vf_{l-1}(\vx)\vg^\t_l(\vx)} = 0$.  
\end{assumption}

Given these, we now arrive at the following theorem:

\def\vis{\mathrm{vis}}
\def\coli{\mathrm{co\mbox{-}linear}}

\begin{figure*}
    \centering
    \includegraphics[width=0.9\textwidth]{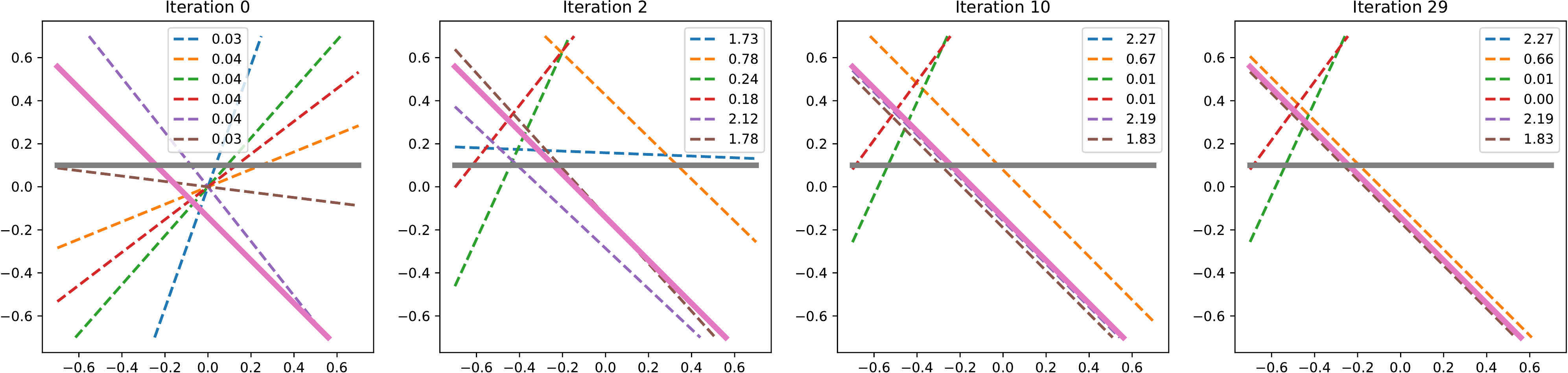}
    \vspace{-0.1in}
    \caption{\small{Convergence (2 dimension) for 2 teachers (solid line) and 6 students (dashed line). Legend shows $\|\vv_k\|$ for student node $k$. $\|\vv_k\|\rightarrow 0$ for nodes that are not aligned with teacher.}}
    \vspace{-0.1in}
    \label{fig:visualization}
\end{figure*}

\begin{theorem}[Student Specialization, 2-layers]
\label{thm:2-layer}
If Assumption~\ref{assumption:noise-free} and Assumption~\ref{assumption:zero-gradient-per-batch} hold, and \textbf{(1)} A teacher node $j$ is observed by a student node $k$, and \textbf{(2)} $\alpha_{kj} \neq 0$ (defined in Def.~\ref{def:V-alpha-beta}), then there exists one student node $k'$ with $0$-alignment (exact alignment) with $j$.
\end{theorem}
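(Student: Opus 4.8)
The plan is to convert the zero-gradient hypothesis into a pointwise identity on an open region and then extract a geometric constraint from the non-differentiable (crease) structure of the ReLU. First I would specialize the gradient rule~\eqref{eq:gradient-rule} to the single observer node $k$: its incoming-weight gradient is the $k$-th column $\eee{\vx\in\cB}{\vf_0(\vx)g_{1,k}(\vx)}$, which Assumption~\ref{assumption:zero-gradient-per-batch} forces to vanish for \emph{every} batch $\cB\subseteq R_0$. Since $R_0$ is open, I may shrink $\cB$ arbitrarily, so $\vf_0(\vx)g_{1,k}(\vx)=\vzero$ for almost every $\vx\in R_0$; as the bias entry of $\vf_0(\vx)$ is the constant $1$, we have $\vf_0(\vx)\neq\vzero$ and hence $g_{1,k}(\vx)=0$ a.e. Writing $g_{1,k}=z_{1,k}h_k$ with $h_k(\vx):=(A_1\vf^*_1(\vx)-B_1\vf_1(\vx))_k$ via Lemma~\ref{thm:same-layer-relationship}, and using that $z_{1,k}\equiv 1$ on the open set $E_k$, I conclude $h_k\equiv 0$ on $E_k\cap R_0$. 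Crucially, $h_k$ is an affine combination of the teacher ReLUs $f^*_{1,j'}$, the student ReLUs $f_{1,k'}$, and a constant, hence a continuous piecewise-linear function whose only creases lie along the boundaries $\partial E^*_{j'}$ and $\partial E_{k'}$.

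Next I would localize at the observer. By Definition~\ref{def:observer} the set $R_0\cap E_k\cap\partial E^*_j$ is nonempty, and being the intersection of the open set $R_0\cap E_k$ with a hyperplane it is relatively open in $\partial E^*_j$; since every other teacher/student boundary meets $\partial E^*_j$ in a lower-dimensional set, I can choose $\vx_0$ through which \emph{no} boundary other than $\partial E^*_j$ passes, together with a ball $B(\vx_0,r)\subseteq E_k\cap R_0$ straddling $\partial E^*_j$. On $B(\vx_0,r)$ every term of $h_k$ is affine except those whose boundary is exactly $\partial E^*_j$, so the full crease of the identity $h_k\equiv 0$ across $\partial E^*_j$ must cancel. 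Decomposing each active ReLU into a linear part plus a $|u|$-crease (with $u:=\tilde\vw^{*\t}_j\vx+b^*_j$; for Leaky ReLU the crease carries the extra factor $1-\cleaky\neq 0$), the vanishing of the crease coefficient reads $\alpha_{kj}=\sum_{k':\,\partial E_{k'}=\partial E^*_j}\beta_{kk'}$. Since $\alpha_{kj}\neq 0$ and, by Assumption~\ref{assumption:noise-free}(2), no other teacher node shares this boundary, the right-hand sum is nonzero and at least one student node $k'$ must satisfy $\partial E_{k'}=\partial E^*_j$. Coincidence of the two boundary hyperplanes together with the regular normalization $\|\tilde\vw_{k'}\|_2=\|\tilde\vw^*_j\|_2=1$ forces $\vw_{k'}=\pm\vw^*_j$, whence $\sin\tilde\theta_{k'j}=0$; fixing the sign then upgrades this to $b_{k'}=b^*_j$, i.e.\ exact $0$-alignment.

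I expect the crux to be the implication ``$h_k\equiv 0$ on an open set straddling $\partial E^*_j$ $\Rightarrow$ a student boundary coincides with $\partial E^*_j$''. Making this airtight needs (i) the genericity argument isolating a single crease at $\vx_0$, so that the surviving crease is attributable solely to node $j$ and the cancelling students, and (ii) a clean statement that a continuous piecewise-linear function vanishing on an open set has zero distributional second derivative across any interior crease, which is what yields the coefficient balance above. The reduction from the per-batch condition to an almost-everywhere identity likewise needs the data measure on $R_0$ to have full support, supplied by Assumption~\ref{assumption:noise-free}(1). Finally, I flag as genuinely delicate the last step of selecting the $+$ sign: the local crease analysis is blind to orientation, since $\vw_{k'}=-\vw^*_j$ produces the \emph{same} crease (its excess linear part is absorbed by the affine remainder of $h_k$) and still gives $\sin\tilde\theta_{k'j}=0$ but $b_{k'}=-b^*_j$. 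Pinning down $b_{k'}=b^*_j$ therefore requires more than the single-crease computation, and I would expect the exact-alignment conclusion to lean on this additional argument rather than on crease cancellation alone.
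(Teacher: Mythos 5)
Your proposal follows essentially the same route as the paper: reduce the per-batch zero-gradient condition to the pointwise identity $\valpha_k^\t\vf^*(\vx)-\vbeta_k^\t\vf(\vx)=0$ on $R_0\cap E_k$ (the paper does this via its Lemma on SGD critical points, using the bias entry of $\vf_0$ exactly as you do), then localize at a generic point of $\partial E^*_j\cap E_k\cap R_0$ through which no other boundary passes, and force the coefficient attached to teacher $j$'s ReLU to be cancelled by co-linear students. Your "vanishing crease coefficient" condition $\alpha_{kj}=\sum_{k'}\beta_{kk'}\gamma_{k'}$ is the paper's $c_s=\alpha_{kj}-\sum_{k'\in\coli(j)}\|\vw_{k'}\|\beta_{kk'}=0$; the paper extracts it by a finite-difference test ($h$ evaluated at $\vx_0$, $\vx_0+\epsilon\ve_k$, and $\vx_0-\epsilon\tilde\vw_j/\|\tilde\vw_j\|^2$) rather than by a distributional second derivative, but these are the same computation. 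The one place you go beyond the paper is the orientation ambiguity you flag at the end, and you are right that it is genuinely delicate: a student with $\vw_{k'}=-\gamma\vw^*_j$ produces the same crease $\gamma|u|/2$ (its linear excess being absorbable locally), shares the hyperplane $\partial E_{k'}=\partial E^*_j$, and therefore defeats the paper's choice of a point $\vx_0\in\partial E^*_j$ avoiding all other boundaries. The paper does not resolve this either — its notion of co-linearity is defined only for $\gamma>0$, so anti-parallel students satisfy the hypotheses of its local-independence lemma while breaking that lemma's proof, and the concluding step "$\coli(j)\neq\emptyset$" silently assumes the cancelling node is positively co-linear. So your proposal is as complete as the paper's own argument, and the concern you raise is a real (if minor) gap in the published proof rather than in yours; closing it would require either strengthening the non-co-linearity assumptions to exclude anti-parallel pairs or a separate argument (e.g., using that $h_k\equiv 0$ holds on all of $R_0\cap E_k$, not just near $\vx_0$) to rule out cancellation by anti-aligned students.
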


\emph{Proof sketch}. Note that ReLU activations $\sigma(\vw_k^\t\vx)$ are mutually linear independent, if their boundaries are within the training region $R_0$. On the other hand, the gradient of each student node $k$ \emph{when active}, is $\valpha^\t_k\vf_1(\vx) - \vbeta^\t_k\vf_1(\vx) = 0$, a linear combination of teacher and student nodes (note that $\valpha^\t_k$ and $\vbeta^\t_k$ are $k$-th rows of $A_1$ and $B_1$). Therefore, zero gradient means that the summation of coefficients of co-linear ReLU nodes is zero. Since teachers are not co-linear (Assumption~\ref{assumption:noise-free}), a non-zero coefficient $\alpha_{kj}\neq 0$ for teacher $j$ means that it has to be co-linear with at least one student node, so that the summation of coefficients is zero. For details, please check detailed proofs in the Appendix. 

Note that for one teacher node, multiple student nodes can specialize to it. For deep linear models, specialization does not happen since a linear subspace spanned by intermediate layer can be represented by different sets of bases. 

Note that a necessary condition of a reconstructed teacher node is that its boundary is in the active region of student, or is \emph{observed} (Definition~\ref{def:observer}). This is intuitive since a teacher node which behaves like a linear node is partly indistinguishable from a bias term. 

For student nodes that are not aligned with any teacher node, if they are observed by other student nodes, then following a similar logic, we have the following:

\begin{theorem}[Un-specialized Student Nodes are Prunable]
\label{thm:net-zero}
If an unaligned student $k$ has $C$ independent observers, i.e., the $C$-by-$C$ matrix stacking the fan-out weights $\vv$ of these observers is full rank, then $\sum_{k'\in \coli(k)} \vv_{k'}\|\vw_{k'}\| = \vzero$. If node $k$ is not co-linear with other students, then $\vv_k = \vzero$.
\end{theorem}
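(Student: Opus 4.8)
The plan is to convert the zero–gradient condition into a pointwise identity and then read off a kink–cancellation equation across the decision boundary $\partial E_k$ of the unaligned node. First I would argue that Assumption~\ref{assumption:zero-gradient-per-batch}, applied to arbitrarily small batches $\cB\subseteq R_0$ (using the constant bias coordinate of $\vf_0$), forces the per–sample backpropagated gradient to vanish: $g_{k''}(\vx)=0$ for almost every $\vx$ and every student node $k''$. By Lemma~\ref{thm:same-layer-relationship} (here $l=1$, where $A_1,B_1$ are constant), this means that on the activation region $E_{k''}$ of each observer $k''$ the residual
\begin{equation}
h_{k''}(\vx) := \valpha^\t_{k''}\vf^*_1(\vx) - \vbeta^\t_{k''}\vf_1(\vx) = 0, \label{eq:residual-zero}
\end{equation}
i.e.\ a linear combination of teacher and student (Leaky) ReLU units is identically zero on an open subset of $E_{k''}\cap R_0$.

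Next I would localize at a point $\vx_0\in R_0\cap E_{k''}\cap\partial E_k$ (which exists precisely because $k''$ observes $k$) and examine the jump of $\nabla_\vx h_{k''}$ across the hyperplane $\partial E_k$. Since $h_{k''}\equiv 0$ in a neighborhood of $\vx_0$, this jump must vanish. The only units contributing a kink at $\partial E_k$ are those whose boundary coincides with it, i.e.\ the units co-linear with $k$; a short computation (noting that crossing into $E_k$ flips both the slope and the active side of an anti-parallel unit, so its contribution keeps a fixed sign) shows each co-linear student $k'$ contributes a jump $(1-\cleaky)\|\vw_{k'}\|\,\hat\vw$ along the common unit normal $\hat\vw$, weighted by its coefficient $-\beta^1_{k''k'} = -\vv^\t_{k''}\vv_{k'}$ in \eqref{eq:residual-zero}. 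Assembling the kink condition yields $\vv^\t_{k''}\big(\sum_{k'\in\coli(k)}\vv_{k'}\|\vw_{k'}\|\big)=0$, provided no teacher unit shares the boundary $\partial E_k$.

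I would then discharge that proviso from unalignment: a teacher node co-linear with $k$ would add an $\valpha$–weighted kink term along $\hat\vw$, and I would argue that the non-degeneracy of Assumption~\ref{assumption:noise-free} together with $k$ being unaligned excludes a teacher boundary on $\partial E_k$. With the teacher term gone, run the identity over the $C$ independent observers $k''_1,\dots,k''_C$; since the $C\times C$ matrix stacking their fan-out weights $\vv_{k''_i}$ is full rank, the common vector $\vs:=\sum_{k'\in\coli(k)}\vv_{k'}\|\vw_{k'}\|\in\rr^C$ must be $\vzero$, which is the first claim. When $k$ is co-linear with no other student, $\coli(k)=\{k\}$ and $\vs=\vv_k\|\vw_k\|=\vzero$; since an observed node has $\|\vw_k\|>0$, we conclude $\vv_k=\vzero$.

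The main obstacle I anticipate is the kink bookkeeping of the second step: making the linear–independence-of-ReLUs argument rigorous, so that grouping by boundary hyperplane leaves only the $\partial E_k$ group and forces its net kink coefficient to vanish, and correctly tracking signs and norm factors for parallel versus anti-parallel co-linear units so that they add into $\sum_{k'}\vv_{k'}\|\vw_{k'}\|$ rather than cancel. The secondary subtlety is cleanly ruling out a teacher unit on $\partial E_k$ under unalignment—in particular the anti-parallel, nonzero-bias configuration, which has zero angle yet is not $0$-aligned—and I expect to resolve this using the separating/non-degeneracy structure in Assumption~\ref{assumption:noise-free}.
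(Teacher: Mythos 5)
Your proposal is correct and follows essentially the same route as the paper: per-sample zero gradient via the batch argument, the recursive gradient rule giving $\valpha^\t_{k''}\vf^*_1 - \vbeta^\t_{k''}\vf_1 \equiv 0$ on each observer's active region, local linear independence of ReLU units at $\partial E_k$ to isolate the co-linear group's coefficient $\vv^\t_{k''}\sum_{k'\in\coli(k)}\vv_{k'}\|\vw_{k'}\| = 0$, and the full-rank observer matrix to conclude. Your gradient-jump formulation of the kink cancellation is just a differential restatement of the paper's Lemma~\ref{lemma:local-colinear} (which uses finite differences around a boundary point), and your explicit handling of anti-parallel units and of excluding a teacher boundary on $\partial E_k$ fills in details the paper leaves implicit.
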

\begin{corollary}
\label{co:zero-contribution}
With sufficient observers, the contribution of all unspecialized student nodes is zero. 
\end{corollary}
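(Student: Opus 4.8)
The plan is to turn the batch zero-gradient condition into a pointwise identity, extract from it a single scalar constraint per observer by reading off the ``kink'' that the boundary $\partial E_k$ creates inside that observer's active region, and then invert the resulting full-rank linear system. First I would show that Assumption~\ref{assumption:zero-gradient-per-batch} forces $\vg_1(\vx)=\vzero$ for almost every $\vx\in R_0$: since $\dot W_1 = \eee{\vx\in\cB}{\vf_0(\vx)\vg_1^\t(\vx)}=0$ for \emph{every} batch $\cB\subseteq R_0$, letting $\cB$ shrink to an arbitrary Lebesgue point and using that the last coordinate of $\vf_0(\vx)$ is the constant $1$, the last row of the rank-one matrix $\vf_0(\vx)\vg_1^\t(\vx)$ is exactly $\vg_1^\t(\vx)$, which must therefore vanish a.e. Expanding the $k'$-th coordinate via Lemma~\ref{thm:same-layer-relationship}, $g_{1,k'}(\vx)=z_{k'}(\vx)\big[\valpha_{k'}^\t\vf_1^*(\vx)-\vbeta_{k'}^\t\vf_1(\vx)\big]$; restricting to the open set $E_{k'}\cap R_0$ where $z_{k'}=1$ and invoking continuity of the piecewise-linear bracket, I obtain the identity $\valpha_{k'}^\t\vf_1^*(\vx)-\vbeta_{k'}^\t\vf_1(\vx)=0$ on all of $E_{k'}\cap R_0$, for each student node $k'$.

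Next comes the core step: isolating the contribution of $\partial E_k$ inside an observer. Fix an observer $k'$ of $k$, so $E_{k'}\cap\partial E_k\cap R_0\neq\emptyset$ by Definition~\ref{def:observer}. Because the finitely many boundaries are pairwise non-colinear (Assumption~\ref{assumption:noise-free}, extended to the student weights in play), I can choose $\vx_0\in E_{k'}\cap\partial E_k\cap R_0$ lying on no other node's boundary and a ball $U\ni\vx_0$ contained in $E_{k'}\cap R_0$ that meets only $\partial E_k$. On $U$ every activation whose hyperplane differs from $\partial E_k$ is affine, while each co-linear node $k''\in\coli(k)$ contributes $\|\vw_{k''}\|\,\sigma(\hat\vw_k^\t[\vx;1])$ by positive homogeneity, carrying a kink of fixed magnitude $(1-\cleaky)$ across $\partial E_k$. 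Since $k$ is unaligned, \emph{no} teacher node is co-linear with $\hat\vw_k$, so the only non-affine term of the identity on $U$ is $-\big(\sum_{k''\in\coli(k)}\beta_{k'k''}\|\vw_{k''}\|\big)\sigma(\hat\vw_k^\t[\vx;1])$. Applying the gradient-jump across $\partial E_k$ (which annihilates all affine terms) forces the bracketed coefficient to vanish; writing $\beta_{k'k''}=\vv_{k'}^\t\vv_{k''}$ and $\vu:=\sum_{k''\in\coli(k)}\vv_{k''}\|\vw_{k''}\|$, this is exactly $\vv_{k'}^\t\vu=0$.

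Finally I would invert the observer system. The previous identity holds for each of the $C$ independent observers $k'$, giving $\vv_{k'}^\t\vu=0$ for $C$ fan-out vectors whose stacking matrix is full rank; hence these $\vv_{k'}$ span $\rr^C$ and $\vu=\sum_{k''\in\coli(k)}\vv_{k''}\|\vw_{k''}\|=\vzero$, which is the first claim. If moreover $k$ is co-linear with no other student, then $\coli(k)=\{k\}$ and $\vu=\|\vw_k\|\,\vv_k$; an observed node has a genuine boundary in $R_0$ so $\|\vw_k\|>0$, and therefore $\vv_k=\vzero$.

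The hard part is the kink-extraction step: making rigorous that an exact linear identity among (Leaky) ReLU activations, valid on an open region, forces the summed coefficient of every co-linear group that kinks inside the region to vanish. This is the same linear-independence-of-ReLUs phenomenon already used for Theorem~\ref{thm:2-layer}, but here the care lies in (i) localizing to a ball meeting only $\partial E_k$ so that all other groups are affine and drop out under the gradient-jump operator, and (ii) tracking the homogeneity factor $\|\vw_{k''}\|$ and orientation so the surviving constraint reads $\vv_{k'}^\t\vu=0$ rather than some reweighted version. Once this local identity is secured, the remaining span/orthogonality argument is routine.
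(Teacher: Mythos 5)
Your argument is correct in substance, but it mostly proves a different statement: what you derive --- $\sum_{k''\in \coli(k)}\vv_{k''}\|\vw_{k''}\|=\vzero$ for each unaligned co-linear group with $C$ independent observers, and $\vv_k=\vzero$ for an isolated unaligned node --- is exactly Theorem~\ref{thm:net-zero}, which the corollary is entitled to cite rather than re-derive. The paper's entire proof of Corollary~\ref{co:zero-contribution} is the aggregation step you leave implicit: partition the unaligned students into co-linear groups $s$, use positive homogeneity to write $\sum_s\sum_{k\in\coli(s)}\vv_k f_k(\vx)=\sum_s\sigma({\vw'_s}^\t\vx)\sum_{k\in\coli(s)}\vv_k\|\vw_k\|$, and observe that each inner sum vanishes by Theorem~\ref{thm:net-zero}. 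You do invoke positive homogeneity inside your kink-extraction step, so the missing sentence is trivial to add, but as written your proof never asserts that the \emph{total} contribution $\sum_k \vv_k f_k(\vx)$ is zero for every $\vx$, which is the corollary's actual content. As for the route you take to re-establish Theorem~\ref{thm:net-zero}: it tracks the paper's own machinery closely --- your pointwise $\vg_1(\vx)=\vzero$ via shrinking batches is a cleaner infinite-sample variant of the Gaussian-elimination argument in Lemma~\ref{thm:sgd-critical-point}, and your ``kink extraction'' on a ball meeting only $\partial E_k$ is precisely Lemma~\ref{lemma:local-colinear} applied to $R=E_{k'}\cap R_0$ --- so inlining it buys nothing beyond self-containedness; citing the theorem keeps the corollary's proof to two lines and avoids duplicating the genericity argument for choosing $\vx_0$ off all other boundaries.
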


Theorem~\ref{thm:net-zero} and Corollary~\ref{co:zero-contribution} explain why network pruning is possible~\citep{lecun1990optimal,hassibi1993optimal,hu2016network}. Note that a related theorem (Theorem 6) in~\citep{laurent2017multilinear} studies 2-layer network with scalar output and linear separable input, and discusses characteristics of individual data point contributing loss in a local minima of GD. In our paper, no linear separable condition is imposed.

\textbf{Network representations}. To compare the intermediate representation of networks trained with different initialization, previous works use Canonical Correlation Analysis (CCA)~\cite{hardoon2004canonical} and its variants (e.g., SVCCA~\cite{raghu2017svcca}) that linearly transform activations of two networks into a common aligned space. This can be explained by Theorem~\ref{thm:2-layer} and Theorem~\ref{thm:net-zero}: multiple student nodes who specialize to one teacher node can be aligned together after linear transformation and un-aligned students can be suppressed by a null transform. 

\textbf{Connectivity between two low-cost solutions.} Previous works~\cite{garipov2018loss, draxler2018essentially} discovered that low-cost solutions for neural networks can be connected via line segments, but not a single straight line. Our framework can explain this phenomenon using a construction similar to~\citep{DBLP:journals/corr/abs-1906-06247} but without the assumption of $\epsilon$-dropout stableness of a trained network using Theorem~\ref{thm:net-zero}. See Appendix~\ref{sec:connectivity} for the construction. 

\section{Main Theorems}
\label{sec:main-theorem}
In practice, stochastic gradient fluctuates around zero and only finite samples are available during training. In this case, we show a rough specialization still follows. For convenience, we define hyperplane band $I(\epsilon)$ as follows:
\begin{definition}[Hyperplane band $I_\vw(\epsilon)$]
    $I_\vw(\epsilon) = \{\vx: |\vw^\t\vx| \le \epsilon\}$. We use $I_j(\epsilon)$ if $\vw$ is from node $j$.  
\end{definition}

\begin{definition}[$(\eta, \mu)$-Dataset]
A dataset $D$ is called $(\eta, \mu)$-dataset, if there exists $\eta, \mu > 0$ so that for any regular weight $\vw$, the number of samples in the hyperplane band $N_D\left[I_\vw(\epsilon)\right] \equiv N[D \cap I_\vw(\epsilon)]$ satisfies: 
\begin{equation}
    N_D\left[I_\vw(\epsilon)\right] \le \eta \epsilon N_D + (d + 1) \label{eq:eta}
\end{equation}
and for any regular $\vw = [\tilde\vw, b]$ with $b = 0$ (no bias):
\begin{equation}
    N_D\left[D \backslash I_\vw\left(1/\epsilon\right)\right] \equiv
    N_D\left[|\tilde\vw^\t\tilde\vx| \ge \frac{1}{\epsilon}\right] \le \mu \epsilon^2 N_D \label{eq:mu}
\end{equation}
where $N_D$ is the size of the dataset.  
\end{definition}
Intuitively, Eqn.~\ref{eq:eta} means that each point of the dataset is scattered around and any hyperplane band $|\vw^\t\vx| \le \epsilon$ cannot cover them all. It is in some sense a high-rank condition for the dataset. The additional term $d+1$ exists because there always exists a hyperplane $\vw_0$ that passes any $d+1$ points (excluding degenerating case). Therefore, $N_D[\vw_0^\t\vx = 0] = d+1$. Eqn.~\ref{eq:mu} can be satisfied with dataset sampled by any zero-mean distribution with finite variance, due to Chebyshev's inequality: $\pr[|\tilde\vw^\t\tilde\vx| \ge 1 / \epsilon] \le \epsilon^2 \mathrm{Var}(\tilde\vw^\t\tilde\vx)$. 

\begin{assumption}
\label{assumption:main}
(a) Two teacher nodes $j\neq j'$ are not $\epsilon_0$-aligned. (b) The boundary band $I_j(\epsilon)$ of each teacher $j$ overlaps with the dataset: 
\begin{equation}
N_D[I_j(\epsilon)] \ge \tau \epsilon N_D \label{eq:teacher-finite-assumption}
\end{equation}
\end{assumption}
\vspace{-0.05in}
Intuitively, this means that we have sufficient samples near the boundary of each teacher nodes, in order to correctly identify the weights $\vw^*_j$ of each teacher node $j$. Again, a teacher node that are not visible from the training set cannot be reconstructed. For this purpose, the reader might wonder why Eqn.~\ref{eq:teacher-finite-assumption} does not impose constraints that there need to be input samples on both sides of the teacher boundary $\partial E^*_j$. The answer is that we also assume proper data augmentation (see Definition~\ref{def:augmentation}). 

\def\aug{\mathrm{aug}}

\begin{figure}
    \centering
    \includegraphics[width=0.5\textwidth]{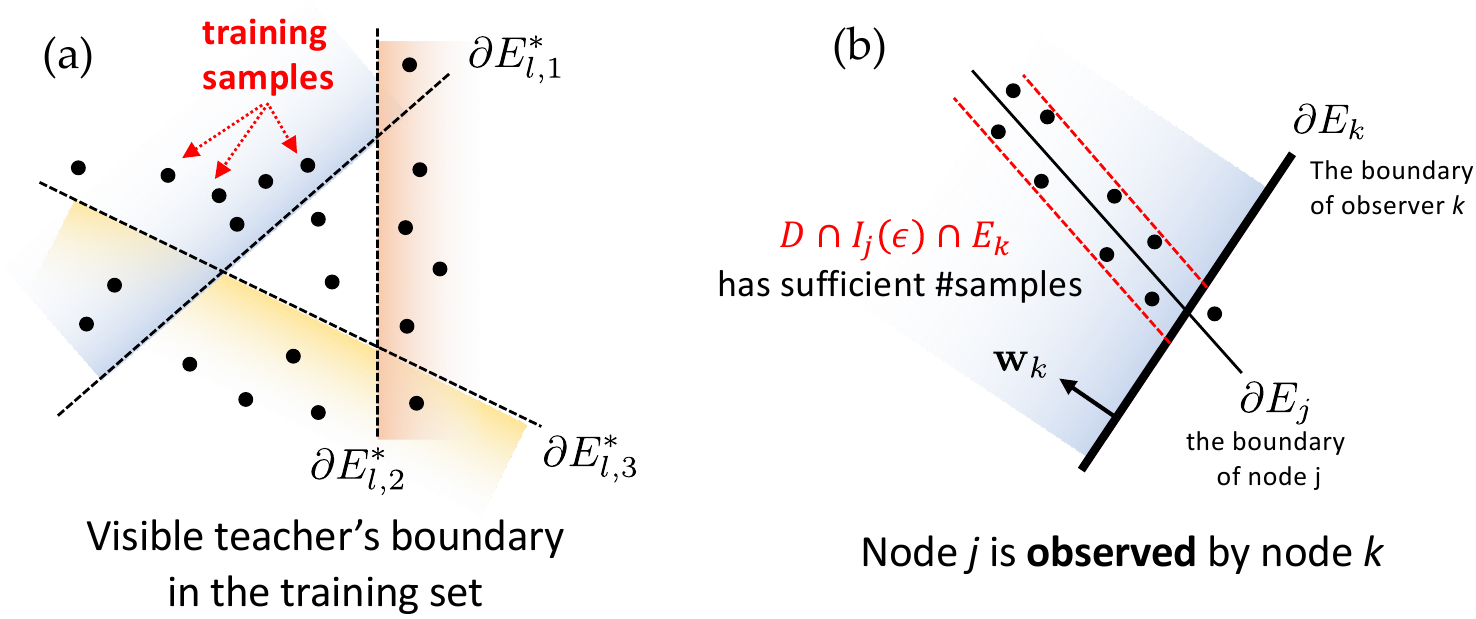}
    \caption{\small{Settings in Main Theorems (Sec.~\ref{sec:main-theorem}). \textbf{(a)} Assumption~\ref{assumption:main}: Teacher boundaries are \emph{visible} in dataset $D$. \textbf{(b)} The definition of observer now incorporates sample counts.}}
    \label{fig:assumptions-finite}
\end{figure}

\subsection{Two Layer Case}
The following two theorems show that with small gradient and polynomial number of samples, a rough specialization still follows in 2-layer network. Here let $K_1 = m_1 + n_1$.

\begin{definition}[Augmentation]
\label{def:augmentation}
Given a dataset $D$, we construct $\aug(D)$ as \textbf{(a)} Teacher-agnostic:
\[
    \aug(D) = \{\vx \pm 2\epsilon\tilde\ve_u/cK^{3/2}_1,\ \ \vx \in D,\ \ u = 1,\ldots,d\} \cup D
\]
where $\tilde\ve_k$ is axis-aligned unit directions with $\|\tilde\ve_k\| = 1$ or \textbf{(b)} Teacher-aware: 
\[
    \aug(D, \cW^*) = \{\vx \pm 2\epsilon\tilde\vw^*_j/cK^{3/2}_1,\ \ \vx \in D \cap I_{\vw^*_j}(\epsilon)\} \cup D
\]
In both definitions, $c$ is a constant related to $(\eta,\mu)$ of $D$ (See proof of Theorem~\ref{thm:finite-sample-recovery-appendix} in Appendix). 
\end{definition}

With the data augmentation, a polynomial number of samples suffice for student specialization to happen. 

\begin{theorem}[Two-layer Specialization with Polynomial Samples]
\label{thm:finite-sample-recovery}
For $0<\epsilon\le\epsilon_0$ and $0<\kappa\le 1$, for any finite dataset $D$ with $N = \cO(K_1^{5/2}d^2\epsilon^{-1}\kappa^{-1})$, for any teacher satisfying Assumption~\ref{assumption:main} and student trained on $D' = \aug(D)$ whose weight $\hW$ satisfies:
\begin{itemize}
    \item[(1)] For $0 < \epsilon < \epsilon_0$, $I_j(\epsilon)$ of a teacher $j$ is \emph{observed} by a student node $k$: $N_D[I_j(\epsilon) \cap E_k] \ge \kappa N_D[I_j(\epsilon)]$;
    \item[(2)] Small gradient: $\|\vg_1(\vx, \hW)\|_\infty \le \frac{\alpha_{kj}}{5K^{3/2}_1\sqrt{d}} \epsilon,\,\vx\in D'$,
\end{itemize}
then there exists a student $k'$ so that $(j, k')$ is $\epsilon$-aligned.
\end{theorem}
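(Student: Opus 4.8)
The plan is to prove the contrapositive of a robust, finite-sample version of the linear-independence argument behind Theorem~\ref{thm:2-layer}. By Lemma~\ref{thm:same-layer-relationship} specialized to $l=1$ (where $A_1,B_1$ are constant), on the activation region $E_k$ the post-gating gradient of student $k$ is the affine-in-features combination $g_k(\vx)=\sum_{j'}\alpha_{kj'}\sigma(\vw^{*\t}_{j'}\vx)-\sum_{k'}\beta_{kk'}\sigma(\vw^\t_{k'}\vx)$, i.e. a weighted sum of teacher and student (Leaky) ReLU functions. In the infinite-sample, zero-gradient case these ReLUs are linearly independent, so a nonzero $\alpha_{kj}$ forces a co-linear student; here I would replace ``linear independence'' by a \emph{local kink detector} that isolates teacher $j$'s contribution from a finite, noisy sample. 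Assume for contradiction that no student is $\epsilon$-aligned with teacher $j$.

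The key device is the second difference of $g_k$ along $\tilde\vw^*_j$ supplied by the augmentation $\aug(D,\cW^*)$. For a triple $\{\vx-\delta\tilde\vw^*_j,\vx,\vx+\delta\tilde\vw^*_j\}\subset E_k$ with $\delta=2\epsilon/(cK_1^{3/2})$, set $\Delta^2 g_k(\vx)=g_k(\vx+\delta\tilde\vw^*_j)-2g_k(\vx)+g_k(\vx-\delta\tilde\vw^*_j)$. A direct computation (using $\|\tilde\vw^*_j\|=1$ and $\sigma(t)=\ii{t\ge0}t+\ii{t<0}\cleaky t$) shows each node $i$ contributes a triangular bump $(1-\cleaky)\,\text{coeff}_i\,(\delta\cos\theta_i-|\vw^\t_i\vx|)_+$, supported only where $\vx$ lies within $\delta$ of $\partial E_i$; in particular teacher $j$ contributes $(1-\cleaky)\alpha_{kj}(\delta-|\vw^{*\t}_j\vx|)_+$. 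For $\vx$ with $|\vw^{*\t}_j\vx|<\delta/2$ this is at least $(1-\cleaky)\alpha_{kj}\delta/2$, while the small-gradient hypothesis forces $|\Delta^2 g_k(\vx)|\le 4\|\vg_1\|_\infty\le \tfrac{4}{5}\alpha_{kj}\epsilon/(K_1^{3/2}\sqrt d)$. With the stated $\delta$ and the appropriate constant $c$ (tied to $(\eta,\mu)$), the teacher signal exceeds this bound by a constant factor, so the only way to reconcile the two is a cancelling bump from another node whose boundary also passes within $\delta$ of $\vx$.

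It then remains to exhibit a single sample triple near $\partial E^*_j$ whose second difference is \emph{uncontaminated}, producing the contradiction. I would count: observation hypothesis (1) together with Assumption~\ref{assumption:main}(b) guarantees $\ge\kappa\tau\epsilon N_D$ active samples of $k$ inside the band $I_j(\epsilon)$; the $(\eta,\mu)$-property (Eqn.~\ref{eq:eta}) caps the number of these that fall within $\delta$ of any fixed other-node boundary by $\eta\delta N_D+(d+1)$, and Eqn.~\ref{eq:mu} caps those with large $\|\tilde\vx\|$ (needed so that a student $k'$ with angle $\sin\tilde\theta_{jk'}>\epsilon$ or bias gap $>\epsilon$ stays $\ge\delta$ away from $\partial E^*_j$ over the relevant samples). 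Summing these contamination counts over the $<K_1$ teacher and student nodes and subtracting, the requirement that the clean count be positive is precisely $N=\cO(K_1^{5/2}d^2\epsilon^{-1}\kappa^{-1})$; the extra factors of $K_1$ and $\sqrt d$ are absorbed by the perturbation scale $1/(cK_1^{3/2})$ and the gradient threshold $1/(K_1^{3/2}\sqrt d)$. A symmetric statement holds for the teacher-agnostic augmentation, where the directional second difference along $\tilde\vw^*_j$ is reconstructed from the $d$ axis-aligned probes.

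I expect the counting in the last paragraph to be the main obstacle. The delicate point is that ``not $\epsilon$-aligned'' is a two-part condition (angle or bias), and a student may be close to teacher $j$ in one coordinate while separated in the other; bounding its bump on the clean samples requires simultaneously controlling both the angular divergence of the two hyperplanes and the spread $\|\tilde\vx\|$ of the data through Eqns.~\ref{eq:eta}--\ref{eq:mu}, and then union-bounding uniformly over all regular weights so that the argument survives for the trained $\hW$ rather than a fixed configuration. Once a clean triple is secured, the contradiction---and hence the existence of an $\epsilon$-aligned student $k'$---is immediate.
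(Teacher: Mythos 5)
Your proposal follows essentially the same route as the paper's proof: the paper's Lemma~\ref{lemma:escape} is exactly your counting argument (using the $(\eta,\mu)$-properties and the two-part non-alignment dichotomy over angle and bias) to extract a clean sample in $I_j(\epsilon')$ away from all other $K_1$ boundaries, and its Lemma~\ref{lemma:gradient-large} is exactly your second-difference kink detector along an augmentation direction, with the $\sqrt{d}$ arising because a single axis-aligned probe $\tilde\ve_u$ with $\vw_j^{*\t}\tilde\ve_u \ge 1/\sqrt{d}$ is used rather than reconstructing the full directional derivative. The approach and the way the constants combine match the paper's argument.
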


\begin{theorem}[Two-layer Specialization with Teacher-aware Dataset with Polynomial Samples] 
\label{thm:finite-sample-teacher-aware-recovery}
For $0 < \epsilon \le \epsilon_0$ and $0<\kappa\le 1$, for any finite dataset $D$ with $N = \cO(K^{5/2}_1d\epsilon^{-1}\kappa^{-1})$, given a teacher network $\cW^*$ satisfying Assumption~\ref{assumption:main} and student trained on $D' = \aug(D, \cW^*)$ whose weight $\hW$ satisfies:
\begin{itemize}
    \item[(1)] For $0 < \epsilon < \epsilon_0$, $I_j(\epsilon)$ of a teacher $j$ is \emph{observed} by a student node $k$: $N_D[I_j(\epsilon) \cap E_k] \ge \kappa N_D[I_j(\epsilon)]$;
    \item[(2)] Small gradient: $\|\vg_1(\vx, \hW)\|_\infty \le \frac{\alpha_{kj}}{5K^{3/2}_1} \epsilon$, for $\vx \in D'$,
\end{itemize}
then there exists a student $k'$ so that $(j, k')$ is $\epsilon$-aligned.
\end{theorem}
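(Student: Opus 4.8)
The plan is to run the same mechanism as in the infinite-sample Theorem~\ref{thm:2-layer} — detect teacher node $j$'s slope discontinuity and argue it can only be cancelled by a co-linear (hence, since teachers are separated by Assumption~\ref{assumption:main}(a), a \emph{student}) node — but now with finite samples and only an approximate, small-gradient version of the zero-gradient identity. First I would reduce the gradient to a single fixed piecewise-linear function. Since the network has two layers there is no gating at the top, so $A_1$ and $B_1$ are constant in $\vx$; writing $\valpha_k,\vbeta_k$ for the $k$-th rows of $A_1,B_1$, Lemma~\ref{thm:same-layer-relationship} gives that on the active region of node $k$ the gradient equals the fixed function $G_k(\vx):=\valpha_k^\t\vf^*_1(\vx)-\vbeta_k^\t\vf_1(\vx)$, a linear combination of the teacher ReLUs $\sigma(\vw^{*\t}_{j'}\vx)$ and student ReLUs $\sigma(\vw^\t_{k'}\vx)$ plus a constant. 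Condition~(2) then reads $|G_k(\vx)|\le\gamma:=\tfrac{\alpha_{kj}}{5K_1^{3/2}}\epsilon$ at every $\vx\in D'$ where node $k$ is active.

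Next I would exploit the teacher-aware augmentation to probe along teacher $j$'s normal. For an anchor $\vx_0\in D\cap I_j(\epsilon)$ with node $k$ active, the augmented points $\vx_0\pm\delta$ with $\delta=2\epsilon\tilde\vw^*_j/(cK_1^{3/2})$ also lie in $D'$, so the discrete second difference $\Delta^2 G_k(\vx_0):=G_k(\vx_0+\delta)-2G_k(\vx_0)+G_k(\vx_0-\delta)$ satisfies $|\Delta^2 G_k(\vx_0)|\le 4\gamma$. Because $G_k$ is piecewise linear, $\Delta^2$ annihilates every affine piece and records only the slope jumps of the ReLUs whose boundary lies inside the probe segment; a direct computation shows each such node contributes its coefficient times $(1-\cleaky)$ times a signed overshoot. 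Crucially, since $\delta$ points exactly along $\tilde\vw^*_j$ (the entire benefit of the teacher-aware construction), teacher $j$'s boundary is hit head-on and contributes $\alpha_{kj}(1-\cleaky)(\|\delta\|-|\vw^{*\t}_j\vx_0|)_+$, i.e. the full kink strength with no $1/\sqrt d$ loss — this is precisely why this theorem improves the gradient tolerance by $\sqrt d$ and the sample count by $d$ over the teacher-agnostic Theorem~\ref{thm:finite-sample-recovery}.

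I would then argue by contradiction: suppose no student node is $\epsilon$-aligned with $j$. Since $\epsilon\le\epsilon_0$, the alignment failure together with Assumption~\ref{assumption:main}(a) means every other teacher boundary and every student boundary is transverse to $\partial E^*_j$ (positive angle) or parallel but bias-shifted, so within the thin probe slab $I_j(\|\delta\|)$ each competing boundary is crossed by the probe only on a lower-order set of anchors. The teacher-boundary density (Assumption~\ref{assumption:main}(b)) at the probe scale supplies enough anchors near $\partial E^*_j$, the observer condition~(1) guarantees a $\kappa$-fraction of them with node $k$ active, and the $(\eta,\mu)$-regularity bounds both the anchors whose perturbed copies leave $E_k$ and the anchors contaminated by another node's crossing. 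Balancing the coherent teacher-$j$ curvature signal (all of one sign, proportional to $\alpha_{kj}$) against the total contamination and the per-triple budget $4\gamma$ produces an anchor at which $\Delta^2 G_k$ is forced to exceed $4\gamma$, a contradiction, once the constant $c$ (fixed by $\eta,\mu$) and the factor $5$ are chosen so the isolated kink strictly dominates. Hence some student $k'$ must be $\epsilon$-aligned with $j$.

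The main obstacle is exactly this counting step. With probe width $\|\delta\|\sim\epsilon/K_1^{3/2}$, a single competing boundary can contaminate a set of anchors of the \emph{same} order as the detection band, so the argument must use transversality to show that each of the $\cO(K_1)$ competing boundaries meets the teacher-$j$ slab only in a second-order (codimension-two) set, and then control the sum of these losses together with the additive $(d+1)$ slack in the $(\eta,\mu)$ inequalities and the possibly large scalar coefficients $\alpha_{kj'},\beta_{kk'}$ (which is why every bound is calibrated relative to $\alpha_{kj}$). Tallying the number of \emph{clean} anchors needed against all these losses is what forces $N=\cO(K_1^{5/2}d\,\epsilon^{-1}\kappa^{-1})$: a factor $K_1^{3/2}$ from the probe width, another $K_1$ from the union over competing boundaries, the $d$ from the additive rank slack, and $\kappa^{-1}$ from the active fraction. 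Handling the leaky case (the extra $\cleaky$ gating factor off the active set) and verifying that the perturbed points remain active for node $k$ are the remaining technical checks, and in the $\epsilon\to0$, $N\to\infty$ limit the whole construction collapses back to the ReLU linear-independence argument of Theorem~\ref{thm:2-layer}.
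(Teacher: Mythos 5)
Your proposal follows essentially the same route as the paper's proof: reduce the gradient to the fixed piecewise-linear residual $\valpha_k^\t\vf^*_1(\vx)-\vbeta_k^\t\vf_1(\vx)$, use the $(\eta,\mu)$-regularity of $D$ together with the non-alignment dichotomy (large angle vs.\ large bias offset) to find an anchor in $I_j(\epsilon')$ that escapes all $K_1-1$ competing bands (the paper's Lemma~\ref{lemma:escape} with $\gamma=2$), and then probe along $\tilde\vw^*_j$ with the teacher-aware augmented pair so that the discrete second difference isolates teacher $j$'s kink and contradicts the gradient bound (Lemma~\ref{lemma:gradient-large} with $\epsilon_h=2\epsilon'$, $\epsilon_l=\epsilon'$). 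The one refinement over your ``coherent signal vs.\ total contamination'' framing is that the paper's escape lemma delivers an anchor whose entire probe triple lies strictly on one side of \emph{every} competing boundary, so the second difference carries no contamination term whatsoever --- which is exactly what sidesteps the unbounded-coefficient issue ($\alpha_{kj'},\beta_{kk'}$ arbitrary) that you correctly flag as the obstacle to any averaging-style balance.
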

\textbf{Remark}. Note that the definition of observation changes slightly in the finite sample setting: a sufficient number of samples needs to be observed in order to show convergence. Note that Theorem~\ref{thm:finite-sample-recovery} contains an additional $\sqrt{d}$ factor in the gradient condition, due to teacher-agnostic augmentation. In fact, following the same proof idea, to remove the factor $\sqrt{d}$, exponential number of samples are needed, or knowledge of the teacher network (Theorem~\ref{thm:finite-sample-teacher-aware-recovery}). This also suggests that for any teacher network $\vf^*(\vx)$, there exists compatible input distribution so that the dataset $\{(\vx, \vf^*(\vx))\}_{\vx\in D}$ can be easy to learn. 

\subsection{Multi-layer Case}
As in the 2-layer case, we can use similar intuition to analyze the behavior of the lowest layer for deep ReLU networks, thanks to Lemma~\ref{thm:same-layer-relationship} which holds for arbitrarily deep ReLU networks. In this case, $A_1(\vx)$ and $B_1(\vx)$ are no longer constant over $\vx$, but are piece-wise constant. Note that $A_1(\vx)$ and $B_1(\vx)$ might contain exponential number of regions, $\cR = \{R_0, R_1, \ldots, R_J\}$, where in each region $R$, $A_l(\vx)$ and $B_l(\vx)$ are constants. 

As intersection of regions, the number boundaries are also exponential. The underlying intuition is that for each intermediate node, its boundary is ``bended'' to another direction, whenever the boundary meets with any boundary of its input nodes~\cite{hanin2019deep, rolnick2019identifying, hanin2019complexity}. All these boundaries will be reflected in the input region. Therefore, the number $Q$ of hyper plane boundaries is exponential with respect to $L$, leading to exponential sample complexity.

\begin{theorem}[Multi-layer alignment, Lowest Layer]
\label{thm:multi-layer}
Given $0 < \epsilon \le \epsilon_0$, for any finite dataset $D$ with $N = \cO(Q^{5/2}d^2\epsilon^{-1}\kappa^{-1})$, if the first layer of the deep teacher network satisfies Assumption~\ref{assumption:main} and any student weight at the first layer $\hW_1$ satisfies:
\begin{itemize}
    \item[(1)] For $0 < \epsilon < \epsilon_0$, $I_j(\epsilon)$ of a teacher $j$ is observed by a student node $k$: $N_D[I_j(\epsilon) \cap E_k] \ge \kappa N_D[I_j(\epsilon)]$;
    \item[(2)] $\|\vg_1(\vx, \hW)\|_\infty \le \frac{\min_{R\in\cR} \alpha_{kj}(R)}{5Q^{3/2}\sqrt{d}}\epsilon$, for $\vx \in D'$,
\end{itemize}
then there exists student node $k'$ so that $(j, k')$ is $\epsilon$-aligned. 
\end{theorem}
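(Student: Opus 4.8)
\emph{Proof sketch}. The plan is to reduce the statement to the two-layer argument of Theorem~\ref{thm:finite-sample-recovery}, applied separately inside each region where the coefficients are constant. By Lemma~\ref{thm:same-layer-relationship}, the $k$-th backpropagated gradient at the first layer is
\[
g_{1,k}(\vx) = z_{1,k}(\vx)\left[\valpha_k(\vx)^\t\vf^*_1(\vx) - \vbeta_k(\vx)^\t\vf_1(\vx)\right],
\]
where $\valpha_k(\vx)$ and $\vbeta_k(\vx)$ are the $k$-th rows of $A_1(\vx)$ and $B_1(\vx)$. By Corollary~\ref{co:piece-wise-constant} these rows are piece-wise constant: on each region $R\in\cR$ they equal fixed vectors with entries $\alpha_{kj}(R)$ and $\beta_{ki}(R)$. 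Hence, restricted to active samples $\vx\in E_k\cap R$, the bracket is a \emph{fixed} linear combination of first-layer teacher ReLUs $\sigma(\vw^{*\t}_{1,j}\vx)$ and student ReLUs $\sigma(\vw^\t_{1,i}\vx)$ — structurally identical to the two-layer setting where $A_1,B_1$ were globally constant. The only new feature is that the kink strength of teacher $j$ is now region dependent, which is exactly why condition (2) is stated with $\min_{R\in\cR}\alpha_{kj}(R)$: the gradient bound must be small relative to this strength in \emph{whichever} region the surviving samples land.

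First I would localize so that a single region is in force. The boundaries separating regions are, within each region, hyperplanes; counting every first-layer boundary together with every ``bended'' higher-layer boundary reflected into input space gives at most $Q$ hyperplane pieces. For a sample $\vx\in I_j(\epsilon)$ lying at distance at least a fixed constant times $\epsilon/Q^{3/2}$ from every other boundary piece, the augmented points $\vx\pm 2\epsilon\tilde\ve_u/cQ^{3/2}$ (Definition~\ref{def:augmentation}, with $K_1$ replaced by $Q$) remain in the same region as $\vx$, so $A_1,B_1$ are genuinely constant across $\vx$ and all of its augmentations. This is the multi-layer analogue of the two-layer localization and is what carries the teacher's kink into a clean single-region linear relation.

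Next I would count the surviving samples. Assumption~\ref{assumption:main}(b) gives $N_D[I_j(\epsilon)]\ge\tau\epsilon N_D$, while the $(\eta,\mu)$-dataset condition bounds the samples lost near each other boundary piece by at most $\eta\cdot\cO(\epsilon/Q^{3/2})N_D+(d+1)$; a union bound over the $\le Q$ pieces removes only $\cO(\eta\epsilon N_D/Q^{1/2})+Q(d+1)$ samples. For $N=\cO(Q^{5/2}d^2\epsilon^{-1}\kappa^{-1})$ a constant fraction of the $\kappa N_D[I_j(\epsilon)]$ observed samples therefore survives inside one region and away from all foreign boundaries. On this surviving set I run the two-layer argument verbatim: the small-gradient condition (2) together with the quantitative linear independence of ReLU nodes supplied by the $(\eta,\mu)$-dataset condition forces the net coefficient of every first-layer direction that is not co-linear with another to be small. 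Since teacher $j$ is not $\epsilon_0$-aligned with any other teacher (Assumption~\ref{assumption:main}(a)) and $\alpha_{kj}(R)\neq 0$, the only way to cancel its kink is a co-linear student node, yielding the desired $k'$ with $(j,k')$ being $\epsilon$-aligned.

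The hard part will be the bookkeeping forced by the exponential region count. Because $A_1(\vx),B_1(\vx)$ differ from region to region, the same teacher kink is seen with different strengths $\alpha_{kj}(R)$, and the band $I_j(\epsilon)$ can be sliced by many of the $Q$ boundary pieces; the delicate point is to show that enough observed samples concentrate in a single region near $\partial E^*_j$ while their augmentations do not leak across a foreign boundary, uniformly over all $Q$ boundaries. Making this union bound quantitative — absorbing it into the $Q^{5/2}$ sample and $Q^{3/2}$ gradient factors, with $\min_{R}\alpha_{kj}(R)$ guarding against the weakest region — is where the real work lies; the local kink-matching step is then a direct transcription of Theorem~\ref{thm:finite-sample-recovery}.
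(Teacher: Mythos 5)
Your proposal follows essentially the same route as the paper's proof: the paper likewise reduces to the two-layer argument (Theorem~\ref{thm:finite-sample-recovery-appendix}) with the $K_1=m_1+n_1$ first-layer boundaries replaced by all $Q$ boundaries (including the ``bent'' higher-layer boundaries reflected into input space), locates a sample $\vx_j\in I_j(\epsilon')\backslash\cup_{j'\neq j}I_{j'}(2\sqrt{d}\epsilon')$ so that it and its augmentations stay in a single region where $A_1,B_1$ are constant, and then invokes the kink-matching contradiction with the region-dependent coefficient guarded by $\min_{R}\alpha_{kj}(R)$. Your write-up is in fact more explicit than the paper's two-sentence sketch about the union bound over the $Q$ boundary pieces, but the underlying argument is identical.
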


\vspace{-0.05in}
\section{Discussions}
\vspace{-0.05in}
The theorems suggest a few interesting consequences:

\textbf{Strong and weak teacher nodes}. Theorems show that the convergence is dependent on $\alpha_{kj} = \vv_k^\t\vv^*_j$, where $\vv^*_j$ is the $j$-th column of $V_1^*$ and $\vv_k$ is the $k$-th column of $V_1^*$. Given the same magnitude of gradient norm, \emph{strong} teacher $j$ (large $\|\vv_j^*\|$ and thus large $\alpha_{kj}$) would achieve specialization, while weak teacher will not achieve specialization. This explains why early stopping~\cite{caruana2001overfitting} could help and suggests how the inductive bias is created during training. We verify this behavior in our experiments.   

\textbf{Dataset matters}. Theorem~\ref{thm:finite-sample-recovery} and theorem~\ref{thm:finite-sample-teacher-aware-recovery} shows different sample complexity for datasets that are augmented with different augmentation methods, one is teacher-agnostic while the other is teacher-aware. This shows that if dataset is \emph{compatible} with the underlying teacher network, the specialization would be much faster. We also verify this behavior in our experiments, showing that training with teacher-aware dataset yields much faster convergence as well as stronger generalization, given very few number of samples. 

\textbf{The role of over-realization}. Theorem~\ref{thm:2-layer} suggests that over-realization (more student nodes in the hidden layer $l=1$) is important. More student nodes mean more observers, and the existence argument in these theorems is more likely to happen and more teacher nodes can be covered by student, yielding better generalization. 

\textbf{Expected SGD conditions}. In practice, the gradient conditions might hold in expectation (or high probability), e.g. $\eee{t}{\|\vg_1(\vx)\|_\infty} \le \epsilon$. This means that $\|\vg_1(\vx)\|_\infty \le \epsilon$ at least for some iteration $t$. All theorems still apply since they do not rely on past history of the weight or gradient. 

\def\ncor{\rho_{\mathrm{mean}}}

\begin{figure*}
    \centering
    \includegraphics[width=0.8\textwidth]{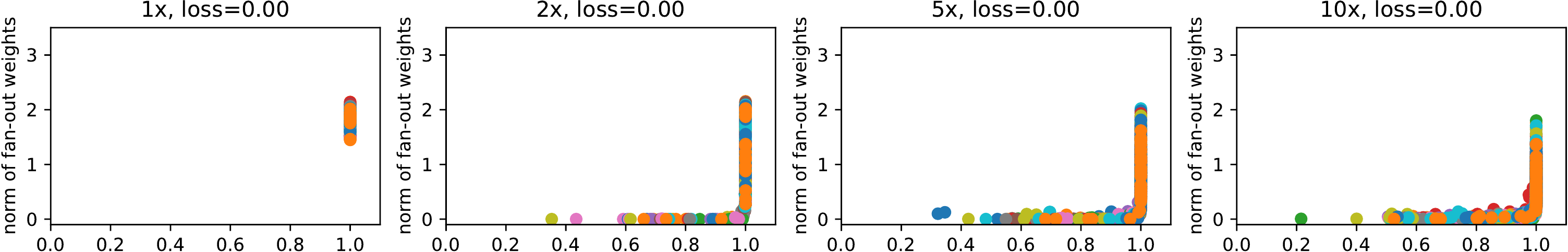}
    \vspace{-0.1in}
    \caption{\small{Student specialization of a 2-layered network with 10 teacher nodes and $1\times$/$2\times$/$5\times$/$10\times$ student nodes.  For a student node $k$, we plot its degree of specialization (i.e., normalized correlation to its best correlated teacher) as the $x$ coordinate and the fan-out weight norm $\|\vv_k\|$ as the $y$ coordinate. We plot results from 32 random seed. Student nodes of different seeds are in different color. An un-specialized student node has low correlations with teachers and low fan-out weight norm (Theorem~\ref{thm:net-zero}). Higher $p$ makes reconstruction of teacher node harder, in particular if the student network is not over-realized.}}.
    \vspace{-0.1in}
    \label{fig:max-corr-at-convergence}
\end{figure*}

\section{Experiments}
We first verify our theoretical finding on synthetic dataset generated by Gaussian distribution $\cN(0, \sigma^2 I)$ with $\sigma = 10$. With other distributions (e.g., uniform $U[-1, 1]$), the result is similar. Appendix ~\ref{sec:teacher-construction} gives details on how we construct the teacher network and training/eval dataset. Vanilla SGD is used with learning rate $0.01$ and batchsize $16$. To measure the degree of student specialization, we define $\ncor$ as the mean of maximal normalized correlation ($\tilde\vf_{j}$ is the normalized activation of node $j$ over the evaluation set):
\begin{equation}
    \ncor = \underset{j\in\ \mathrm{teacher}}{ \mathrm{mean}} \max_{j'\in\ \mathrm{student}} \rho_{jj'}, \quad \rho_{jj'} = \tilde\vf^{*\t}_j\tilde\vf_{j'}, 
\end{equation}
\textbf{Strong/weak teacher nodes}. To demonstrate the effect of strong and weak teacher nodes, we set up a diverse strength of teacher node by constructing the fan-out weights of teacher node $j$ as follows:
\begin{equation}
    \|\vv^*_j\| \sim 1 / j^p, \label{eq:teacher-fanout-decay}
\end{equation}
where $p$ is the \emph{teacher polarity factor} that controls how strong the energy decays across different teacher nodes. $p = 0$ means all teacher nodes have the same magnitude of fan-out weights, and large $p$ means that the strength of teacher nodes are more polarized, i.e., some teacher nodes are very strong, some are very weak. 

\textbf{Two layer networks}. First we verify Theorem~\ref{thm:2-layer} and Theorem~\ref{thm:net-zero} in the 2-layer setting. Fig.~\ref{fig:max-corr-at-convergence} shows for different degrees of over-realization ($1\times$/$2\times$/$5\times$/$10\times$), for nodes with weak specialization (i.e., its normalized correlation to the most correlated teacher is low, left side of the figure), their magnitudes of fan-out weights ($y$-axis) are small. The nodes with strong specialization have high fan-out weights. 

\textbf{Deep Networks}. 
For deep ReLU networks (4-layer), we observe student specialization at \emph{each} layer, shown in Fig.~\ref{fig:sample-complexity-deep}. We can also see the lowest layer converges better than the top layers at different sample sizes, in particular with MSE loss. Although our theory does not apply to Cross Entropy loss yet, empirically we still see specialization in multiple layers, in particular at the lowest layer.    

\subsection{Ablation studies}
\textbf{Strong/weak teacher node}. We plot the average rate of a teacher node that is matched with at least one student node successfully (i.e., correlation $>0.95$). Fig.~\ref{fig:recover-teacher-node} shows that stronger teacher nodes are more likely to be matched, while weaker ones may not be explained well, in particular when the strength of the teacher nodes are polarized ($p$ is large). This is consistent with Theorem~\ref{thm:finite-sample-recovery} which shows that a strong teacher node can be specialized even if the gradient magnitude is still relatively large, compared to a weak one. Fig.~\ref{fig:best-corr-over-iteration} further shows the dynamics of specialization for each teacher node: strong teacher node gets specialized fast, while it takes very long time to have students specialized to weak teacher nodes.

\textbf{Effects of Over-realization}. Over-realized student can explain more teacher nodes, while a student with $1\times$ nodes has sufficient capacity to fit the teacher perfectly, it gets stuck despite long training. 

\textbf{Sample Complexity}. Fig.~\ref{fig:sample-complexity-deep} shows hows node correlation (and generalization) changes when sample complexity when we use MSE or Cross Entropy Loss. With more samples, generalization becomes better and $\ncor$ also becomes better. Note that although our analysis doesn't include CE Loss, empirically we see $\ncor$ grows, in particular at the lowest layer, when $N$ becomes large. 

\textbf{Teacher-agnostic versus Teacher-aware}. Theorem~\ref{thm:finite-sample-teacher-aware-recovery} and Theorem~\ref{thm:finite-sample-recovery} shows that with different datasets (or same dataset but with different data augmentation), sample complexity can be very different. As shown in Fig.~\ref{fig:aware-agnostic}, if we construct a dataset with the knowledge of teacher, then a student trained on it will not overfit even with small number of samples. This shows the dependency of sample complexity with respect to the dataset. 

\begin{figure*}
    \centering
    \includegraphics[width=\textwidth]{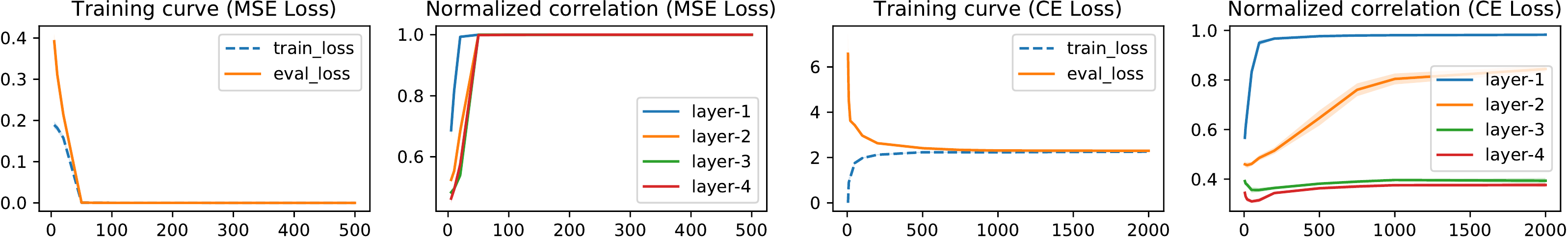}
    \caption{\small{Relationship between evaluation loss and normalized correlation $\ncor$  ($y$-axis), and sample complexity ($x$-axis, $\times 1000$) for MSE/Cross Entropy (CE) Loss function. Teacher is 4-layer with 50-75-100-125 hidden nodes and student is $2\times$ over-realization.}}
    \label{fig:sample-complexity-deep}
\end{figure*}

\begin{figure*}
    \centering
    \includegraphics[width=\textwidth]{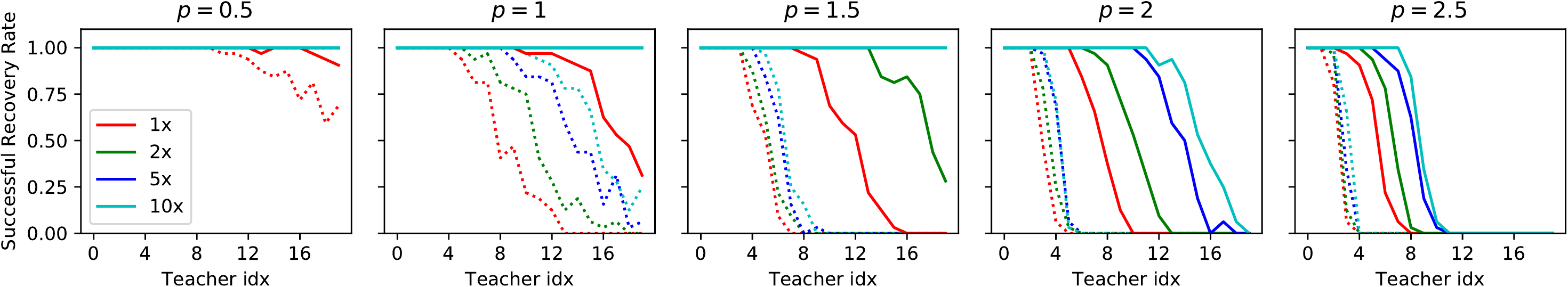}
    \caption{\small{Success rate (over 32 random seeds) of recovery of 20 teacher nodes on 2-layer network at different teacher polarity $p$ (Eqn.~\ref{eq:teacher-fanout-decay}) under different over-realization. Dotted line: successful rate after 5 epochs. Solid line: successful rate after 100 epochs. }}
    \label{fig:recover-teacher-node}
\end{figure*}

\begin{figure*}
    \centering
    \includegraphics[width=\textwidth]{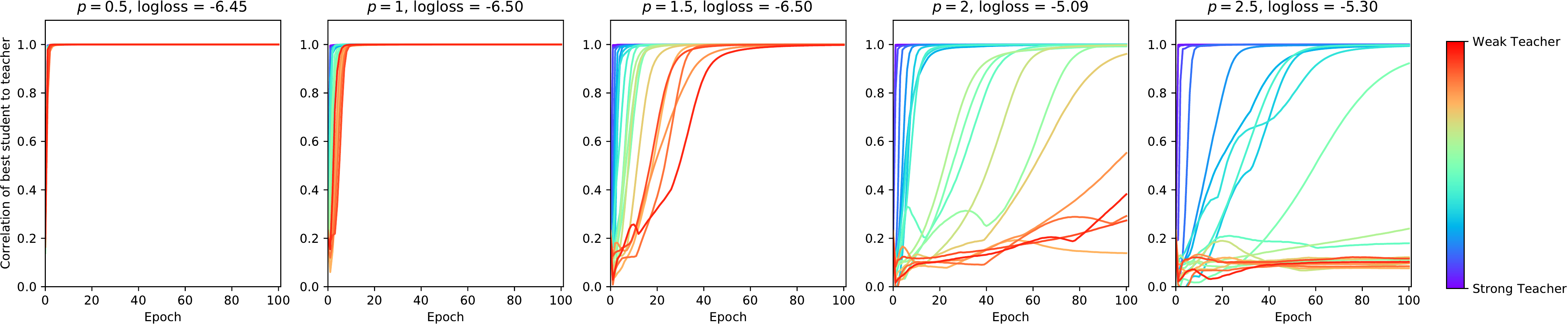}
    \caption{\small{Dynamics of specialization (2-layer) over iterations. Each rainbow color represents one of the 20 teacher nodes (blue: strongest teacher, red: weakest). Strong teacher nodes (blue) can get specialized very quickly, while weak teacher nodes (red) is not specialized for a long time. The student is $5\times$ over-realization. Large $p$ means strong polarity of teacher nodes (Eqn.~\ref{eq:teacher-fanout-decay}).}}
    \label{fig:best-corr-over-iteration}
\end{figure*} 

\begin{figure*}
    \centering
    \includegraphics[width=\textwidth]{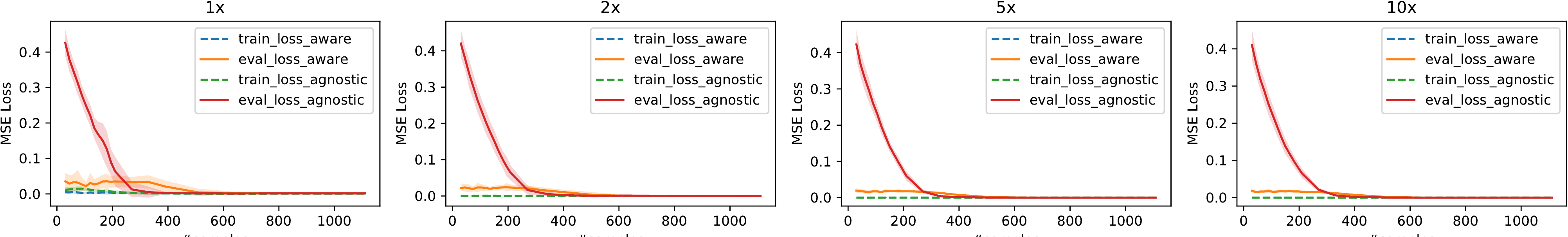}
    \caption{\small{MSE Loss ($y$-axis) versus number of samples used ($x$-axis) using teacher-aware and teacher agnostic dataset in 2-layer network ($10$ random seeds). While training loss is low on both cases, teacher-aware dataset leads to substantial lower evaluation loss.}}
    \label{fig:aware-agnostic}
\end{figure*}

\begin{figure*}
    \centering
    \includegraphics[width=0.9\textwidth]{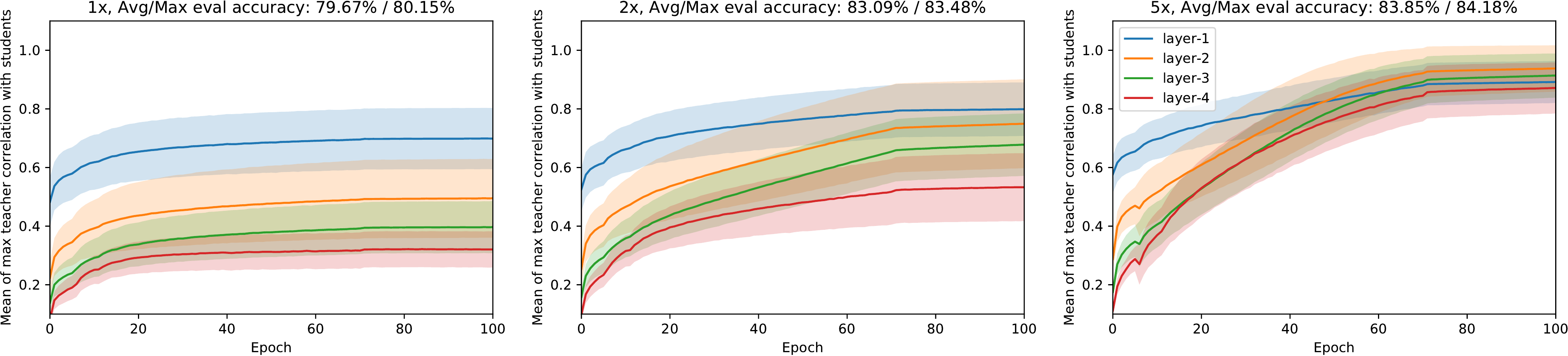}
    \caption{\small{Mean of the max teacher correlation $\rho_{\mathrm{mean}}$ with student nodes over epochs in CIFAR10. More over-realization gives better student specialization across all layers and achieves strong generalization (higher evaluation accuracy on CIFAR-10 evaluation set).}}
    \label{fig:cifar-10}
\end{figure*}

\textbf{CIFAR-10}. We also experiment on CIFAR-10. We first pre-train a teacher network with 64-64-64-64 ConvNet ($64$ are channel sizes of the hidden layers, $L = 5$) on  CIFAR-10 training set. Then the teacher network is pruned in a structured manner to keep strong teacher nodes. The student is over-realized based on teacher's remaining channels.

Fig.~\ref{fig:cifar-10} shows the convergence and specialization behaviors of student network. More over-realization leads to stronger specialization and improved generalization evaluated on CIFAR-10 evaluation set. 

\section{Conclusion and Future Work}
\vspace{-0.1in}
In this paper, we use student-teacher setting to analyze how an (over-realized) deep ReLU student network trained with SGD learns from the output of a teacher. When the magnitude of gradient per sample is small, the teacher can be proven to be specialized by (possibly multiple) students and thus the teacher network is recovered at the lowest layer. We also provide finite sample analysis about when it happens. As future works, it is interesting to show specialization at \emph{every} layer in deep networks and understand training dynamics in the teacher-student setting. On the empirical side, student-teacher setting can be useful when analyzing many practical phenomena (e.g., connectivity of low-cost solutions). 

\section*{Acknowledgement}
The author thanks Banghua Zhu, Xiaoxia (Shirley) Wu, Lexing Ying, Jonathan Frankle, Ari Morcos, Simon Du, Sanjeev Arora, Wei Hu, Xiang Wang, Yang Yuan, Song Zuo and anonymous reviewers for insightful comments. 

\bibliographystyle{icml2020}
\bibliography{main3}

\newif\ifsupp
\supptrue

\ifsupp
\clearpage

\onecolumn
\appendix

\icmltitle{Supplementary Materials for Student Specialization in Deep Rectified Networks With Finite Width and Input Dimension}

\section{More related works}
\textbf{Deep Linear networks}. For deep linear networks, multiple works~\citep{lampinen2018analytic,saxe2013exact,arora2018a,advani2017high} have shown interesting training dynamics. One common assumption is that the singular spaces of weights at nearby layers are aligned at initialization, which decouples the training dynamics. Such a nice property would not hold for nonlinear network. ~\citep{lampinen2018analytic} shows that under this assumption, weight components with large singular value are learned first, while we analyze and observe empirically similar behaviors on the student node level. Generalization property of linear networks can also be analyzed in the limit of infinite input dimension with teacher-student setting~\citep{lampinen2018analytic}. However, deep linear networks lack specialization which plays a crucial role in the nonlinear case. To our knowledge, we are the first to analyze specialization rigorously in deep ReLU networks. 

\textbf{SGD versus GD}. Stochastic Gradient Descent (SGD) shows strong empirical performance than Gradient Descent (GD)~\citep{shallue2018measuring} in training deep models. SGD is often treated as an approximate, or a noisy version of GD~\citep{bertsekas2000gradient,hazan2014beyond,marceau2017natural,goldt2019dynamics,bottou2010large}. In contrast, many empirical evidences show that SGD achieves better generalization than GD when training neural networks, which is explained via implicit regularization~\citep{rethinking,neyshabur2015path}, by converging to flat minima~\citep{hochreiter1997flat,chaudhari2016entropy,wu2018sgd} , robust to saddle point~\citep{jin2017escape,daneshmand2018escaping,ge2015escaping,exponentialtime} and perform Bayesian inference~\citep{welling2011bayesian, mandt2017stochastic, chaudhari2018stochastic}. 

Similar to this work, interpolation setting~\citep{ma2017power,liu2018mass,bassily2018exponential} assumes that gradient at each data point vanish at the critical point. While they mainly focus on convergence property of convex objective, we directly relate this condition to specific structure of deep ReLU networks.

\section{A Mathematical Framework}
\subsection{Lemma~\ref{thm:same-layer-relationship}}
\begin{proof}
We prove by induction. When $l = L$ we know that $\vg_L(\vx) = \vf^*_L(\vx) - \vf_L(\vx)$, by setting $V^*_L(\vx) = V_L(\vx) = I_{C\times C}$ and the fact that $D_L(\vx) = I_{C\times C}$ (no ReLU gating in the last layer), the condition holds. 

Now suppose for layer $l$, we have:
\begin{eqnarray}
\vg_l(\vx) &=& D_l(\vx)\left[A_l(\vx) \vf^*_l(\vx) - B_l(\vx) \vf_l(\vx) \right] \\
&=& D_l(\vx)V_l^\t(\vx)\left[V^*_l(\vx)\vf^*_l(\vx) - V_l(\vx)\vf_l(\vx)\right]
\end{eqnarray}

Using
\begin{eqnarray}
    \vf_l(\vx) &=& D_l(\vx)W_l^\t\vf_{l-1}(\vx) \\ \vf^*_l(\vx) &=& D^*_l(\vx)W_l^{*\t}\vf^*_{l-1}(\vx) \\ \vg_{l-1}(\vx) &=& D_{l-1}(\vx)W_l\vg_l(\vx)
\end{eqnarray}
we have:

\begin{eqnarray}
\vg_{l-1}(\vx) &=& D_{l-1}(\vx)W_l\vg_l(\vx) \\
&=& D_{l-1}(\vx)\underbrace{W_lD_l(\vx)V_l^\t(\vx)}_{V_{l-1}^\t(\vx)}\left[V^*_l(\vx) \vf^*_l(\vx) - V_l(\vx) \vf_l(\vx) \right] \\
&=& D_{l-1}(\vx)V_{l-1}^\t(\vx)\left[\underbrace{V_l^*(\vx)D_l^*(\vx)W_l^{*\t}}_{V^*_{l-1}(\vx)}\vf^*_{l-1}(\vx) - \underbrace{V_l(\vx)D_l(\vx)W_l^{\t}}_{V_{l-1}(\vx)}\vf_{l-1}(\vx)\right] \\
&=& D_{l-1}(\vx)V_{l-1}^\t(\vx)\left[V^*_{l-1}(\vx)\vf^*_{l-1}(\vx) - V_{l-1}(\vx)\vf_{l-1}(\vx)\right] \\
&=& D_{l-1}(\vx)\left[A_{l-1}(\vx) \vf^*_{l-1}(\vx) - B_{l-1}(\vx) \vf_{l-1}(\vx)\right]
\end{eqnarray}
\end{proof}

\subsection{Lemma~\ref{thm:sgd-critical-point}}
\begin{lemma}
\label{thm:sgd-critical-point}
Denote $\cD = \{\vx_i\}$ as a dataset of $N$ samples. If Assumption~\ref{assumption:zero-gradient-per-batch} holds, then either $\vg_l(\vx_i; \hW) = \vzero$ or $\vf_{l-1}(\vx_i; \hW) = \vzero$.
\end{lemma}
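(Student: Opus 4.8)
The plan is to exploit the fact that Assumption~\ref{assumption:zero-gradient-per-batch} is quantified over \emph{every} batch $\cB \subseteq R_0$, and in particular over singleton batches. First I would specialize the assumption to $\cB = \{\vx_i\}$ for each sample $\vx_i \in \cD$; this is admissible because $\{\vx_i\} \subseteq R_0$ whenever $\vx_i \in \cD \subseteq R_0$. For a single-element batch the expectation $\eee{\vx \in \cB}{\cdot}$ collapses to its single summand, so the zero-gradient condition $\dot W_l = 0$ becomes the rank-one matrix identity
\begin{equation}
\vf_{l-1}(\vx_i; \hW)\,\vg^\t_l(\vx_i; \hW) = 0 .
\end{equation}
The crucial point is that passing to singletons converts a statement about an \emph{averaged} outer-product matrix vanishing into a statement about each \emph{individual} sample's contribution vanishing.

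The second step is the elementary linear-algebra fact that an outer product $\va\vb^\t$ of two vectors equals the zero matrix if and only if $\va = \vzero$ or $\vb = \vzero$: if some coordinate $a_p \neq 0$, then every entry $(\va\vb^\t)_{pq} = a_p b_q$ vanishing forces $b_q = 0$ for all $q$, hence $\vb = \vzero$, and symmetrically. Applying this with $\va = \vf_{l-1}(\vx_i; \hW)$ and $\vb = \vg_l(\vx_i; \hW)$ yields exactly the claimed dichotomy: for each $i$, either $\vg_l(\vx_i; \hW) = \vzero$ or $\vf_{l-1}(\vx_i; \hW) = \vzero$.

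There is essentially no hard step here; the lemma is a direct pointwise consequence of the per-batch critical-point assumption, and the only ingredient is the rank-one structure of the single-sample gradient. I would note in passing that, because $\vf_{l-1}$ carries a constant bias coordinate equal to $1$ for $l \ge 1$, the forward activation is never the zero vector there, so the first alternative is vacuous and the operative conclusion reduces to $\vg_l(\vx_i; \hW) = \vzero$ at every sample; this stronger reading is what the subsequent specialization arguments actually rely on. If anything qualifies as a subtlety rather than an obstacle, it is simply confirming that singleton batches fall within the scope of Assumption~\ref{assumption:zero-gradient-per-batch} and that each $\vx_i$ lies in $R_0$.
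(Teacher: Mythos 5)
Your proof is correct under a literal reading of Assumption~\ref{assumption:zero-gradient-per-batch} (``every batch $\cB\subseteq R_0$'' includes singletons), but it takes a genuinely different and much shorter route than the paper. The paper does \emph{not} specialize to singleton batches; instead it treats the batch size $|\cB|$ as fixed (as it would be in actual minibatch SGD), observes that the zero-gradient identity holds for all $\binom{N}{|\cB|}$ batches of that size, and uses a Gaussian-elimination argument on pairs of overlapping batches to conclude that the per-sample outer products $U_i = \vg_l(\vx_i;\hW)\vf^\t_{l-1}(\vx_i;\hW)$ are all equal, hence each is zero; it then invokes the same rank-one observation you use. What your approach buys is brevity, but what it gives up is exactly the content that makes the lemma nontrivial: if the assumption is read in the operationally natural way --- the gradient vanishes on every batch \emph{of the size actually used in SGD} --- then singleton batches are not available and your first step fails, whereas the paper's elimination argument still goes through. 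The paper's route also extends (as noted in its remark) to the approximate case $\|\dot W_l\|_\infty\le\epsilon$, yielding $\|U_i\|_\infty\le 2\epsilon$ per sample, which is what the finite-sample theorems later need; the singleton shortcut gives no analogous leverage there because it never exploits cancellation across overlapping batches. Your closing observation that the bias coordinate forces $\vf_{l-1}\neq\vzero$, so the operative conclusion is $\vg_l(\vx_i;\hW)=\vzero$, is correct and is exactly how the paper uses the lemma in the proof of Theorem~\ref{thm:2-layer}.
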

\begin{proof}
From Assumption~\ref{assumption:zero-gradient-per-batch}, we know that for any batch $\cB_j$, Eqn.~\ref{eq:gradient-rule} vanishes: 
\begin{equation}
    \dot W_l = \eee{\vx}{\vg_l(\vx; \hW)\vf^\t_{l-1}(\vx; \hW)} = \sum_{i\in\cB_j} \vg_l(\vx_i; \hW)\vf^\t_{l-1}(\vx_i; \hW) = 0 \label{eq:batch-zero}
\end{equation}
Let $U_i = \vg_l(\vx_i; \hW)\vf^\t_{l-1}(\vx_i; \hW)$. Note that $\cB_j$ can be any subset of samples from the data distribution. Therefore, for a dataset of size $N$, Eqn.~\ref{eq:batch-zero} holds for all $\binom{N}{|\cB|}$ batches, but there are only $N$ data samples. With simple Gaussian elimination we know that for any $i_1 \neq i_2$, $U_{i_1} = U_{i_2} = U$. Plug that into Eqn.~\ref{eq:batch-zero} we know $U = 0$ and thus for any $i$, $U_i = 0$. Since $U_i$ is an outer product, the theorem follows. 

Note that if $\|\dot W_l\|_\infty \le \epsilon$, which is $\|\sum_{i\in\cB_j} U_i\|_\infty \le \epsilon$, then with simple Gaussian elimination for two batches $\cB_1$ and $\cB_2$ with only two sample difference, we will have for any $i_1\neq i_2$, $\|U_{i_1} - U_{i_2}\|_\infty = \|\sum_{i\in\cB_1} U_i - \sum_{i\in\cB_2} U_i\|_\infty \le \|\sum_{i\in\cB_1} U_i\|_\infty + \|\sum_{i\in\cB_2} U_i\|_\infty = 2\epsilon$. Plug things back in and we have $|\cB| \|U_i\|_\infty \le [2(|\cB| - 1) + 1]\epsilon$, which is $\|U_i\|_\infty \le 2\epsilon$. If $\vf_{l-1}(\vx; \hW)$ has the bias term, then immediately we have $\|\vg_l(\vx;\hW)\|_\infty \le \epsilon$. 
\end{proof}

\section{Two-layer, Infinite Samples and Zero Gradient}
\begin{definition}[Regular weight vector]
    A weight vector $\vw = [\tilde\vw, b] \in \rr^{d+1}$ is called \emph{regular}, if $\|\tilde\vw\|_2 = 1$.
\end{definition}

\begin{figure}
    \centering
    \includegraphics[width=0.9\textwidth]{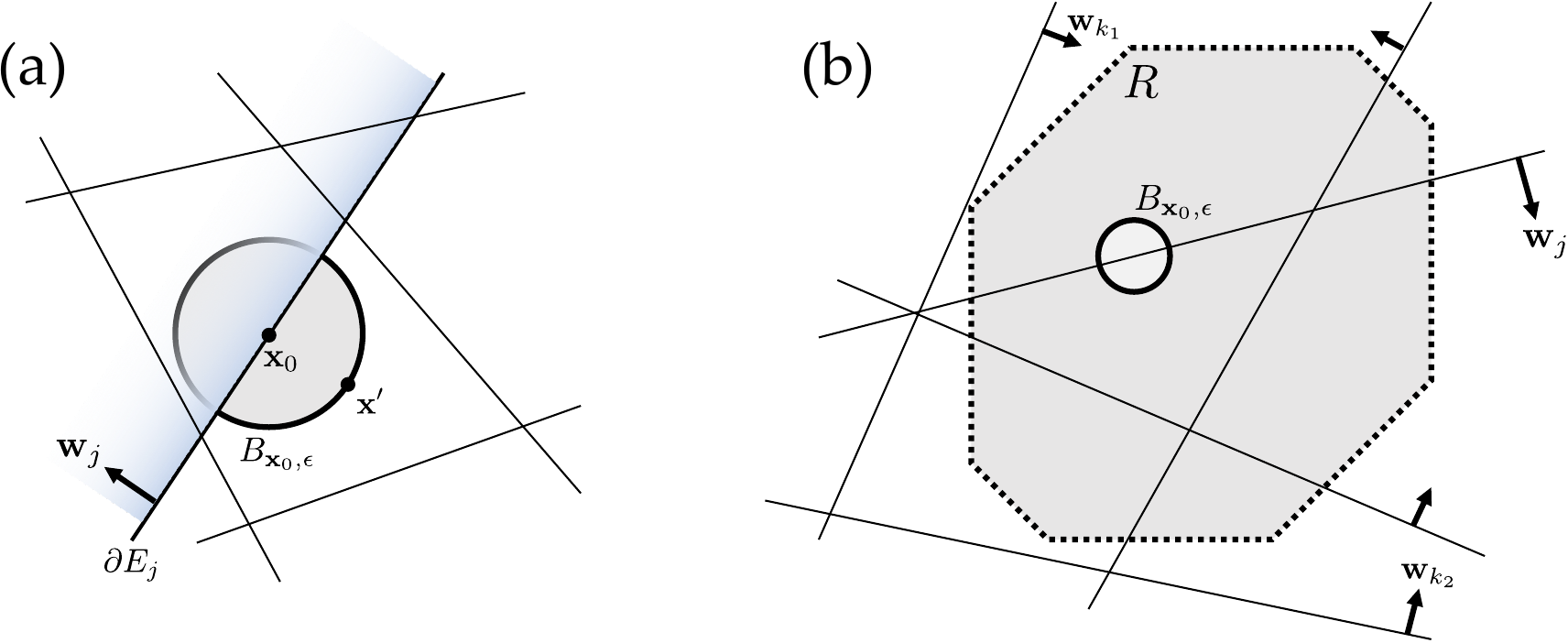}
    \caption{Proof illustration for \textbf{(a)} Lemma~\ref{lemma:colinear}, \textbf{(b)} Lemma~\ref{lemma:local-colinear}.} 
    \label{fig:proof1}
\end{figure}

\subsection{Corollary~\ref{co:piece-wise-constant}}
\begin{corollary}[Piecewise constant]
\label{co:piece-wise-constant}
$R_0$ can be decomposed into a finite (but potentially exponential) set of regions $\cR_{l-1} = \{R_{l-1}^1, R_{l-1}^2, \ldots, R_{l-1}^J\}$ plus a zero-measure set, so that $A_l(\vx)$ and $B_l(\vx)$ are constant within each region $R_{l-1}^j$ with respect to $\vx$.  
\end{corollary}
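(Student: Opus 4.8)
The plan is to trace how $A_l(\vx)$ and $B_l(\vx)$ depend on $\vx$, reduce that dependence to the gating functions of the upper layers, and then exploit the piecewise-constant structure of those gatings. By Definition~\ref{def:V-alpha-beta} we have $A_l = V_l^\t V_l^*$ and $B_l = V_l^\t V_l$, and the recursion $V_{l-1} = V_l D_l W_l^\t$ (with $V_L = V_L^* = I$ fixed) shows that $V_l(\vx)$ is a product of the constant weight matrices $W_{l+1}, \ldots, W_L$ interleaved with the gating matrices $D_{l+1}(\vx), \ldots, D_L(\vx)$, and likewise $V_l^*(\vx)$ depends only on $D_{l+1}^*(\vx), \ldots, D_L^*(\vx)$. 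Hence the only $\vx$-dependence entering $A_l$ and $B_l$ comes through the finite collection of scalar gating functions $z_{l',i}(\vx)$ with $l' > l$, on both the student and the teacher side.

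First I would recall that, by the definition of (Leaky) ReLU, each $z_{l',i}(\vx)$ is a function of the sign of the pre-activation $f_{l',i}(\vx)$: it equals $1$ on the active side and $0$ (or $\cleaky$ in the Leaky case) on the inactive side, and thus is constant off the decision boundary $\partial E_{l',i} = \{\vx : f_{l',i}(\vx) = 0\}$. Since $\vf_{l'-1}$ is continuous and piecewise linear in $\vx$, each pre-activation is piecewise linear, so every boundary $\partial E_{l',i}$ is a finite union of hyperplane pieces and in particular has Lebesgue measure zero.

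Next I would prove finiteness of the partition by induction on the layer index. At layer $1$ the pre-activations are affine in $\vx$, so their boundaries are genuine hyperplanes cutting $R_0$ into finitely many cells. Assuming that after processing layers $1, \ldots, l'-1$ the region $R_0$ has been refined into finitely many cells on each of which every pre-activation is affine, adding the finitely many layer-$l'$ boundaries subdivides each existing cell by finitely many hyperplane pieces, again producing finitely many cells. Taking the common refinement over all nodes of all layers $l' > l$, on both the teacher and student sides, then yields a finite partition $\cR_{l-1} = \{R_{l-1}^1, \ldots, R_{l-1}^J\}$; what lies outside these cells is exactly the union of all the boundaries, a finite union of measure-zero sets and hence itself of measure zero.

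Finally, on the interior of each cell $R_{l-1}^j$ every $z_{l',i}(\vx)$ is constant, so each $D_{l'}(\vx)$ and $D_{l'}^*(\vx)$ is a fixed diagonal matrix there; substituting into the recursion makes $V_l(\vx)$ and $V_l^*(\vx)$ constant, and therefore $A_l(\vx)$ and $B_l(\vx)$ are constant on $R_{l-1}^j$, as claimed. I expect the main obstacle to be the finiteness claim in the inductive step: one must confirm that each layer contributes only finitely many new boundary pieces and that their arrangement produces finitely many cells rather than accumulating. This is precisely the standard linear-region-counting argument for piecewise-linear networks, and the potentially exponential growth of $J$ with depth is anticipated rather than harmful here, since the statement only requires $J < \infty$.
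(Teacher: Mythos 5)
Your proof is correct and follows essentially the same route as the paper's: both reduce the $\vx$-dependence of $A_l$ and $B_l$ to the gating matrices via the recursion $V_{l-1} = V_l D_l W_l^\t$ and conclude piecewise constancy by induction. The only difference is that you explicitly justify the finiteness of the partition and the measure-zero claim via the standard linear-region-counting induction, a detail the paper's proof leaves implicit when it asserts that $D_l(\vx)$ and $D_l^*(\vx)$ are piecewise constant.
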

\begin{proof}
The base case is that $V_L(\vx) = V^*_L(\vx) = I_{C\times C}$, which is constant (and thus piece-wise constant) over the entire input space. If for layer $l$, $V_l(\vx)$ and $V^*_l(\vx)$ are piece-wise constant, then by Eqn.~\ref{eq:v-recursive} (rewrite it here):
\begin{equation}
    V_{l-1}(\vx) = V_l(\vx)D_l(\vx)W^\t_l, \quad V^*_{l-1}(\vx) = V^*_l(\vx)D^*_l(\vx)W^{*\t}_l
\end{equation}
since $D_l(\vx)$ and $D^*_l(\vx)$ are piece-wise constant and $W^\t_l$ and $W^{*\t}_l$ are constant, we know that for layer $l - 1$, $V_{l-1}(\vx)$ and $V^*_{l-1}(\vx)$ are piece-wise constant. Therefore, for all $l = 1, \ldots L$, $V_l(\vx)$ and $V^*_l(\vx)$ are piece-wise constant. 

Therefore, $A_l(\vx)$ and $B_l(\vx)$ are piece-wise constant with respect to input $\vx$. They separate the region $R_0$ into constant regions with boundary points in a zero-measured set.  
\end{proof}

\subsection{Lemma~\ref{lemma:colinear}}
\begin{lemma}
\label{lemma:colinear}
Consider $K$ ReLU activation functions $f_j(\vx) = \sigma(\vw_j^\t\vx)$ for $j = 1\ldots K$. If $\vw_j \neq 0$ and no two weights are co-linear, then $\sum_{j'} c_{j'}f_{j'}(\vx) = 0$ for all $\vx\in\rr^{d+1}$ suggests that all $c_j = 0$.
\end{lemma}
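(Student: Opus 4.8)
The plan is to exploit the fact that each $f_j(\vx) = \sigma(\vw_j^\t\vx)$ is piecewise linear with a single ``kink'' (gradient discontinuity) concentrated on the hyperplane $H_j = \{\vx : \vw_j^\t\vx = 0\}$, and that these kinks cannot cancel one another when the defining hyperplanes are distinct. The starting observation is that pairwise non-co-linearity of the weights $\vw_j$ is exactly the statement that the $H_j$ are pairwise distinct hyperplanes through the origin in $\rr^{d+1}$. A purely linear (in particular, the identically zero) function has no kink anywhere, so if $\sum_{j'} c_{j'} f_{j'} \equiv 0$, the kink carried by each individual $f_j$ must be forced to vanish, and this will pin down $c_j = 0$ one index at a time.

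First I would isolate the kink of a chosen node $j$. Because the hyperplanes are distinct, each intersection $H_j \cap H_{j'}$ with $j' \neq j$ is a proper subspace of $H_j$; a finite union of proper subspaces cannot cover $H_j$, so I can pick a point $\vx_0 \in H_j$ lying on \emph{no} other hyperplane $H_{j'}$. Next I would restrict the identity to a line through $\vx_0$ transverse to $H_j$: choose a direction $\vu$ with $a := \vw_j^\t\vu \neq 0$ and set $g(t) = \sum_{j'} c_{j'} f_{j'}(\vx_0 + t\vu)$. For every $j' \neq j$ we have $\vw_{j'}^\t\vx_0 \neq 0$, so $\vw_{j'}^\t(\vx_0 + t\vu)$ keeps a fixed sign for all sufficiently small $|t|$; hence each such $f_{j'}$ is affine in $t$ near $0$ and contributes no kink. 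Only the $j$-th term is active at the crossing, and since $\vw_j^\t\vx_0 = 0$ it reduces to $c_j\,\sigma(ta)$.

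I would then compute the jump in the derivative of $g$ at $t = 0$. A direct evaluation of the one-sided derivatives of $\sigma(ta)$ gives a gradient jump (right minus left derivative) of magnitude $(1-\cleaky)|a|$, so the total jump of $g$ at $0$ equals $c_j (1-\cleaky)|a|$. Since $g \equiv 0$ is smooth, this jump is $0$; because $a \neq 0$ and $\cleaky < 1$ we conclude $c_j = 0$. Repeating the argument for each $j = 1,\ldots,K$ yields $c_j = 0$ for all $j$, establishing linear independence.

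The step that demands the most care, and where the non-co-linearity hypothesis is genuinely used, is the \emph{localization}: guaranteeing a point $\vx_0$ on $H_j$ that avoids every other hyperplane, so that exactly one kink is present along the transverse line. Everything downstream — the one-sided derivative computation and the cancellation argument — is routine once a single uncancelled kink has been exhibited. The same reasoning covers both ReLU ($\cleaky = 0$) and Leaky ReLU ($0 < \cleaky < 1$), since the jump factor $1-\cleaky$ is strictly positive in either case.
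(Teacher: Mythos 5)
Your proof is correct and takes essentially the same route as the paper's: both arguments isolate a point $\vx_0 \in \partial E_j$ lying on no other node's boundary (the only place non-co-linearity is used) and then show that the single surviving kink of $f_j$ there cannot be cancelled, forcing $c_j = 0$. The only cosmetic difference is that you detect the kink via a one-sided derivative jump along a transverse line (with the factor $1-\cleaky$ covering Leaky ReLU explicitly), whereas the paper uses finite differences inside a small ball followed by an evaluation on the inactive side of $\partial E_j$.
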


\begin{proof}
Suppose there exists some $c_j \neq 0$ so that $\sum_j c_jf_j(\vx) = 0$ for all $\vx$. Pick a point $\vx_0 \in \partial E_j$ so that $\vw_j^\t\vx_0 = 0$ but all $\vw_{j'}^\t\vx_0 \neq 0$ for $j' \neq j$, which is possible due to the distinct weight conditions. Consider an $\epsilon$-ball $B_{\vx_0, \epsilon} = \{\vx : \|\vx-\vx_0\| \le \epsilon\}$. We pick $\epsilon$ so that $\mathrm{sign}(\vw_{j'}^\t\vx)$ for all $j'\neq j$ remains the same within $B_{\vx_0, \epsilon}$ (Fig.~\ref{fig:proof1}(a)). Denote $[j^+]$ as the indices of activated ReLU functions in $B_{\vx_0, \epsilon}$ except $j$. 

Then for all $\vx \in B_{\vx_0, \epsilon} \cap E_j$, we have:
\begin{equation}
    h(\vx) \equiv \sum_{j'} c_{j'} f_{j'}(\vx) = c_j \vw_j^\t\vx + \sum_{j'\in[j^+]} c_{j'}\vw_{j'}^\t\vx = 0 \label{eq:hzero}
\end{equation}
Since $B_{\vx_0, \epsilon}$ is a $d$-dimensional object rather than a subspace, for $\vx_0$ and $\vx_0 + \epsilon\ve_k \in B(\vx_0, \epsilon)$, we have 
\begin{equation}
    h(\vx_0 + \epsilon\ve_k) - h(\vx_0) = \epsilon(c_j w_{jk} + \sum_{j'\in[j^+]} c_{j'}w_{j'k}) = 0    
\end{equation}
where $\ve_k$ is axis-aligned unit vector ($1\le k \le d$). This yields
\begin{equation}
    c_j\tilde\vw_j + \sum_{j'\in [j^+]} c_{j'}\tilde\vw_{j'} = \vzero_d
\end{equation}
Plug it back to Eqn.~\ref{eq:hzero} yields 
\begin{equation}
    c_j b_j + \sum_{j'\in[j^+]} c_{j'} b_{j'} = 0    
\end{equation}
where means that for the (augmented) $d+1$ dimensional weight:
\begin{equation}
    c_j\vw_j + \sum_{j'\in [j^+]} c_{j'}\vw_{j'} = \vzero_{d+1}
\end{equation}
However, if we pick $\vx' = \vx_0 - \epsilon \frac{\tilde\vw_j}{\|\tilde\vw_j\|^2} \in B_{\vx_0, \epsilon} \cap E_j^\complement$, then $f_j(\vx') = 0$ but $\sum_{j'\in[j^+]} f_j'(\vx') = -c_j \vw_j^\t\vx' = \epsilon c_j$ and thus 
\begin{equation}
    \sum_{j'} c_{j'} f_{j'}(\vx') = \epsilon c_j \neq 0
\end{equation}
which is a contradiction.
\end{proof}

\newcommand\sbullet[1][.5]{\mathbin{\vcenter{\hbox{\scalebox{#1}{$\bullet$}}}}}

\subsection{Lemma~\ref{lemma:local-colinear}} \begin{lemma}[Local ReLU Independence]
\label{lemma:local-colinear}
Let $R$ be an open set. Consider $K$ ReLU nodes $f_j(\vx) = \sigma(\vw_j^\t\vx)$, $j = 1, \ldots, K$. $\vw_j\neq 0$, $\vw_j \neq \gamma\vw_{j'}$ for $j\neq j'$ with any $\gamma > 0$. 

If there exists $c_1, \ldots, c_K, c_{\sbullet}$ so that the following is true:
\begin{equation}
    \sum_j c_j f_j(\vx) + c_{\sbullet} \vw^\t_{\sbullet}\vx = \vzero, \quad \forall \vx \in R
\end{equation}
and for node $j$, $\partial E_j \cap R \neq \emptyset$, then $c_j = 0$.
\end{lemma}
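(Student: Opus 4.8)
The plan is to localize the argument of Lemma~\ref{lemma:colinear} to a small ball sitting inside the open set $R$. Fix a node $j$ with $\partial E_j \cap R \neq \emptyset$; I will show $c_j = 0$. Write $h(\vx) = \sum_{j'} c_{j'} f_{j'}(\vx) + c_\sbullet \vw_\sbullet^\t\vx$, so that $h \equiv 0$ on $R$. The idea is to compare the single affine expression that $h$ collapses to on the two sides of the hyperplane $\partial E_j$: on the active side $f_j$ contributes the linear piece $c_j\vw_j^\t\vx$, while on the inactive side it contributes nothing, and this asymmetry forces $c_j = 0$ exactly as in the global lemma. The only genuinely new point is that every evaluation must be kept inside $R$, which openness and a small radius will guarantee.

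First I would choose a clean boundary point $\vx_0 \in \partial E_j \cap R$ at which $\vw_{j'}^\t\vx_0 \neq 0$ for all $j' \neq j$. Such a point exists because the non-co-linearity hypothesis makes each $\partial E_{j'}$ a hyperplane distinct from $\partial E_j$, so $\partial E_{j'} \cap \partial E_j$ has lower dimension and the finite union of these intersections cannot cover the relatively open, $(d-1)$-dimensional patch $\partial E_j \cap R$. Since $R$ is open, I then pick $\epsilon > 0$ with $B_{\vx_0,\epsilon} \subseteq R$ and small enough that $\mathrm{sign}(\vw_{j'}^\t\vx)$ is constant on $B_{\vx_0,\epsilon}$ for every $j' \neq j$; let $[j^+]$ index those $f_{j'}$ with $j'\neq j$ that are active there.

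On the open half-ball $B_{\vx_0,\epsilon} \cap E_j$ the function $h$ agrees with the single affine map $c_j\vw_j^\t\vx + \sum_{j'\in[j^+]} c_{j'}\vw_{j'}^\t\vx + c_\sbullet\vw_\sbullet^\t\vx$ and vanishes identically. Since an affine function that is zero on a $d$-dimensional open set has all of its coefficients equal to zero, the augmented identity $c_j\vw_j + \sum_{j'\in[j^+]} c_{j'}\vw_{j'} + c_\sbullet\vw_\sbullet = \vzero$ holds in $\rr^{d+1}$. Now I evaluate at the inactive-side probe $\vx' = \vx_0 - \epsilon\,\tilde\vw_j/\|\tilde\vw_j\|^2$, which lies in $B_{\vx_0,\epsilon}\cap E_j^\complement \subseteq R$ for small $\epsilon$; there $f_j(\vx') = 0$, and the augmented identity gives $\sum_{j'\in[j^+]} c_{j'}\vw_{j'}^\t\vx' + c_\sbullet\vw_\sbullet^\t\vx' = -c_j\vw_j^\t\vx' = \epsilon c_j$. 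Hence $h(\vx') = \epsilon c_j$, but $\vx'\in R$ forces $h(\vx') = 0$, so $c_j = 0$. (For leaky activation the inactive side contributes $\cleaky c_j\vw_j^\t\vx'$, and the same computation yields $h(\vx')=(1-\cleaky)\epsilon c_j$, still forcing $c_j=0$.)

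The main obstacle is the very first step, namely producing a boundary point of $j$ that is free of all other boundaries; this is the one place where non-co-linearity is essential and where care is needed. A weight $\vw_{j'}$ that is a \emph{negative} multiple of $\vw_j$ yields a boundary coinciding with $\partial E_j$ and a partner that switches on the opposite side, so it cannot be avoided and the hypothesis must be read as excluding coincident boundaries (equivalently, any such opposite pair is first folded into one ReLU plus a linear term absorbed into $c_\sbullet\vw_\sbullet^\t\vx$). Everything after the clean-point selection is a routine transcription of Lemma~\ref{lemma:colinear}, with openness of $R$ and a sufficiently small $\epsilon$ ensuring that both $\vx_0$ and the probe $\vx'$ stay inside $R$, where $h$ is known to vanish.
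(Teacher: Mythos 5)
Your proposal is correct and follows essentially the same route as the paper's own proof: pick a boundary point of $\partial E_j$ inside $R$ that avoids all other boundaries, take a small ball contained in $R$, and rerun the two-sided argument of Lemma~\ref{lemma:colinear} (with the extra linear term absorbed harmlessly into the affine identity). Your added remark about weights that are \emph{negative} multiples of $\vw_j$ producing coincident boundaries is a genuine subtlety that the paper's one-line proof does not address, and your resolution (fold the opposite-facing pair into one ReLU plus a linear term carried by $c_{\sbullet}\vw_{\sbullet}^\t\vx$) is a reasonable way to close it.
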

\begin{proof}
We can apply the same logic as Lemma~\ref{lemma:colinear} to the region $R$ (Fig.~\ref{fig:proof1}(b)). For any node $j$, since its boundary $\partial E_j$ is in $R$, we can find a similar $\vx_0$ so that $\vx_0\in \partial E_j \cap R$ and $\vx_0 \notin \partial E_{j'}$ for any $j'\neq j$. We construct $B_{\vx_0, \epsilon}$. Since $R$ is an open set, we can always find $\epsilon > 0$ so that $B_{\vx_0, \epsilon} \subseteq R$ and no other boundary is in this $\epsilon$-ball. Following similar logic of Lemma~\ref{lemma:colinear}, $c_j = 0$. 
\end{proof}

\subsection{Theorem~\ref{thm:2-layer}}
\begin{proof}
In this situation, because $D_2(\vx) = D^*_2(\vx) = I$, according to Eqn.~\ref{eq:v-recursive}, $V_1(\vx) = W_1^\t$ and $V^*_1(\vx) = W_1^{*\t}$ are independent of input $\vx$. Therefore, both $A_1$ and $B_1$ are independent of input $\vx$. 

From Assumption~\ref{assumption:noise-free}, since $\rho(\vx) > 0$ in $R_0$, from Lemma~\ref{thm:sgd-critical-point}, we know that either $\vg_1(\vx) = \vzero{}$ or $\vx = \vzero$. However, since $\vx = [\tilde\vx, 1]$ has bias term, $\vg_1(\vx) = D_1(\vx) \left[A_1\vf^*_1(\vx) - B_1\vf_1(\vx)\right] = \vzero{}$. Picking node $k$, the following holds for every node $k$ and every $\vx \in R_0 \cap E_k$:
\begin{equation}
    \valpha_k^\t \vf^*(\vx) - \vbeta_k^\t\vf(\vx) = \vzero \label{eq:sgd-critical-k}
\end{equation}
Here $\valpha^\t_k$ is the $k$-th row of $A_1$, $A_1 = [\valpha_1, \ldots, \valpha_{n_1}]^\t$ and similarly for $\vbeta^\t_k$. Note here layer index $l=1$ is omitted for brevity.

For teacher $j$, suppose it is observed by student $k$, i.e., $\partial E^*_j \cap E_k \neq \emptyset$. Given all teacher and student nodes, note that co-linearity is a equivalent relation, we could partition these nodes into disjoint groups. Suppose node $j$ is in group $s$. In Eqn.~\ref{eq:sgd-critical-k}, if we combine all coefficients in group $s$ together into one term $c_s\vw^*_j$ (with $\|\vw^*_j\| = 1$), we have:
\begin{equation}
    c_s = \alpha_{kj} -\sum_{k'\in \coli(j)} \|\vw_{k'}\| \beta_{kk'}
\end{equation}
``At most'' because from Assumption~\ref{assumption:noise-free}, all teacher weights are not co-linear. Note that $\coli(j)$ might be an empty set.  

By Assumption~\ref{assumption:noise-free}, $\partial E^*_j \cap R_0 \neq \emptyset$ and by observation property, $\partial E^*_j \cap E_k \neq \emptyset$, we know that for $R = R_0 \cap E_k$, $\partial E^*_j \cap R \neq \emptyset$. Applying Lemma~\ref{lemma:local-colinear}, we know that $c_s = 0$. Since $\alpha_{kj} \neq 0$, we know $\coli(j) \neq \emptyset$ and there exists at least one student $k'$ that is aligned with the teacher $j$.
\end{proof}

\subsection{Theorem~\ref{thm:net-zero}}
\begin{proof}
We basically apply the same logic as in Theorem~\ref{thm:2-layer}. Consider the colinear group $\coli(k)$. If for all $k' \in \coli(k)$, $\beta_{k'k'} \equiv \|\vv_{k'}\|^2 = 0$, then $\vv_{k'} = \vzero$ and the proof is complete. 

Otherwise, if there exists some student $k$ so that $\vv_{k} \neq \vzero$. By the condition, it is observed by some student node $k_o$, then with the same logic we will have
\begin{equation}
\sum_{k' \in \coli(k)} \beta_{k_o, k'} \|\vw_{k'}\| = 0
\end{equation}
which is
\begin{equation}
    \vv^\t_{k_o} \sum_{k' \in \coli(k)} \vv_{k'} \|\vw_{k'}\| = 0
\end{equation}

Since $k$ is observed by $C$ students $k_o^1, k_o^2, \ldots, k_o^J$, then we have:
\begin{equation}
    \vv^\t_{k_o^j} \sum_{k' \in \coli(k)} \vv_{k'}\|\vw_{k'}\| = 0
\end{equation}
By the condition, all the $C$ vectors $\vv^\t_{k_o^j} \in \rr^C$ are linear independent, then we know that 
\begin{equation}
    \sum_{k' \in \coli(k)} \vv_{k'} \|\vw_{k'}\| = \vzero
\end{equation}
\end{proof}

\subsection{Corollary~\ref{co:zero-contribution}}
\begin{proof}
We can write the contribution of all student nodes which are not aligned with any teacher nodes as follows:
\begin{eqnarray}
    \sum_{s}\sum_{k\in \coli(s)} \vv_k f_k(\vx) &=& \sum_{s}\sum_{k\in \coli(s)} \vv_k \|\vw_k\|\sigma({\vw'_s}^\t \vx) \\
    &=& \sum_{s} \sigma({\vw'_s}^\t \vx) \sum_{k\in \coli(s)} \vv_k \|\vw_k\|
\end{eqnarray}
where $\vw'_s$ is the unit vector that represents the common direction of the co-linear group $s$. From Theorem~\ref{thm:net-zero}, for group $s$ that is not aligned with any teacher, $\sum_{k\in \coli(s)} \vv_k \|\vw_k\| = \vzero$ and thus the net contribution is zero. 
\end{proof}

\section{Main Theorems}
\subsection{Lemma~\ref{lemma:relation}} 
\begin{lemma}[Relation between Hyperplanes]
\label{lemma:relation}
Let $\vw_j$ and $\vw_{j'}$ two distinct hyperplanes with $\|\tilde\vw_j\| = \|\tilde\vw_{j'}\| = 1$. Denote $\tilde\theta_{jj'}$ as the angle between the two vectors $\tilde\vw_j$ and $\tilde\vw_{j'}$. Then there exists $\tilde\vu_{j'} \perp \tilde\vw_j$ and $\vw_{j'}^\t \tilde\vu_{j'} = \sin\tilde\theta_{jj'}$.
\end{lemma}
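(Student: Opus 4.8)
The plan is to reduce the statement to an elementary two-dimensional fact about the spatial parts $\tilde\vw_j, \tilde\vw_{j'} \in \rr^d$ of the two weight vectors. The claimed vector $\tilde\vu_{j'}$ lives in the spatial subspace, so — read as an augmented vector with a zero in the bias coordinate — it annihilates the bias entry of $\vw_{j'}$, giving $\vw_{j'}^\t \tilde\vu_{j'} = \tilde\vw_{j'}^\t \tilde\vu_{j'}$. Hence it suffices to produce a unit vector $\tilde\vu_{j'} \in \rr^d$ with $\tilde\vu_{j'} \perp \tilde\vw_j$ and $\tilde\vw_{j'}^\t \tilde\vu_{j'} = \sin\tilde\theta_{jj'}$, which is just a Gram–Schmidt / orthogonal-projection construction.

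First I would orthogonally decompose $\tilde\vw_{j'}$ with respect to $\tilde\vw_j$. Since $\|\tilde\vw_j\| = 1$, write
\begin{equation}
    \tilde\vw_{j'} = (\tilde\vw_j^\t \tilde\vw_{j'})\,\tilde\vw_j + \tilde\vw_{j'}^\perp = \cos\tilde\theta_{jj'}\,\tilde\vw_j + \tilde\vw_{j'}^\perp,
\end{equation}
where $\tilde\vw_{j'}^\perp \perp \tilde\vw_j$ is the residual. Using $\|\tilde\vw_{j'}\| = 1$ together with the Pythagorean identity gives $\|\tilde\vw_{j'}^\perp\|^2 = 1 - \cos^2\tilde\theta_{jj'} = \sin^2\tilde\theta_{jj'}$, and since $\tilde\theta_{jj'} \in [0,\pi]$ we have $\|\tilde\vw_{j'}^\perp\| = \sin\tilde\theta_{jj'} \ge 0$.

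Next, in the generic (non-co-linear) case $\sin\tilde\theta_{jj'} > 0$, I would take $\tilde\vu_{j'} := \tilde\vw_{j'}^\perp / \sin\tilde\theta_{jj'}$. By construction this is a unit vector perpendicular to $\tilde\vw_j$, and a direct computation using $\tilde\vw_j^\t \tilde\vw_{j'}^\perp = 0$ yields
\begin{equation}
    \tilde\vw_{j'}^\t \tilde\vu_{j'} = \frac{(\cos\tilde\theta_{jj'}\,\tilde\vw_j + \tilde\vw_{j'}^\perp)^\t \tilde\vw_{j'}^\perp}{\sin\tilde\theta_{jj'}} = \frac{\|\tilde\vw_{j'}^\perp\|^2}{\sin\tilde\theta_{jj'}} = \sin\tilde\theta_{jj'}.
\end{equation}
In the degenerate case $\sin\tilde\theta_{jj'} = 0$, i.e. $\tilde\vw_{j'} = \pm\tilde\vw_j$, any unit vector $\tilde\vu_{j'} \perp \tilde\vw_j$ satisfies $\tilde\vw_{j'}^\t \tilde\vu_{j'} = \pm\tilde\vw_j^\t \tilde\vu_{j'} = 0 = \sin\tilde\theta_{jj'}$, so the claim holds trivially.

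I do not expect a genuine obstacle, since the lemma is an orthogonal-decomposition identity. The only points demanding care are bookkeeping ones: confirming the dimension and bias convention so that the $\rr^d$ vector $\tilde\vu_{j'}$ pairs correctly against $\vw_{j'} \in \rr^{d+1}$ (the bias of $\vw_{j'}$ drops out), fixing the sign of the square root by restricting $\tilde\theta_{jj'}$ to $[0,\pi]$ so that $\sin\tilde\theta_{jj'} \ge 0$, and separating out the co-linear case in which the residual $\tilde\vw_{j'}^\perp$ vanishes.
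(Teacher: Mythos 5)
Your construction is exactly the paper's: the paper defines $\tilde\vu_{j'} = \frac{1}{\sin\tilde\theta_{jj'}} P^\perp_{\tilde\vw_j}\tilde\vw_{j'}$, i.e. the normalized orthogonal residual of $\tilde\vw_{j'}$ against $\tilde\vw_j$, and leaves the verification as "easy to check," which you carry out explicitly. Your proof is correct and takes essentially the same approach, merely adding the bias-coordinate bookkeeping and the degenerate co-linear case that the paper omits.
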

\begin{proof}
Note that the projection of $\tilde\vw_{j'}$ onto $\tilde \vw_j$ is:
\begin{equation}
\tilde\vu_{j'} = \frac{1}{\sin\tilde\theta_{jj'}} P^\perp_{\tilde\vw_j} \tilde\vw_{j'}
\end{equation}
It is easy to verify that $\|\tilde\vu_{j'}\| = 1$ and $\vw_{j'}^\t \tilde\vu_{j'} = \sin\tilde\theta_{jj'}$.
\end{proof}

\begin{definition}[Alignment of $(j, j')$ by error $(M_1\epsilon, M_2\epsilon)$] 
Two nodes $j$ and $j'$ are called aligned, if their weights $\vw_j = [\tilde\vw_j, b_j]$ and $\vw_{j'} = [\tilde\vw_{j'}, b_{j'}]$ satisfy the following:
\begin{equation}
\sin\tilde\theta_{jj'} \le M_1\epsilon, \quad\quad |b_j - b_{j'}| \le M_2\epsilon
\end{equation}
\end{definition}

\begin{definition}[Constrained $\eta$-Dataset]
For a weight vector $\vw$ and $\epsilon \le \epsilon_0$, a dataset $D' = D\cap I_\vw(\epsilon)$ is called a constrained $\eta$-Dataset, if for any regular $\vw'$ with $\tilde\vw^T\tilde \vw' = 0$, we have:
\begin{equation}
    N\left[D'\cap I_{\vw'}(\epsilon)\right] \le \eta_\vw \epsilon N_{D'} + (d + 1)    
\end{equation}
Note $\eta_\vw$ is independent of $\epsilon$.
\end{definition}

Note that $D'$ is always a constrained $\eta$-dataset for sufficiently large $\eta$. 

\begin{figure}
    \centering
    \includegraphics[width=\textwidth]{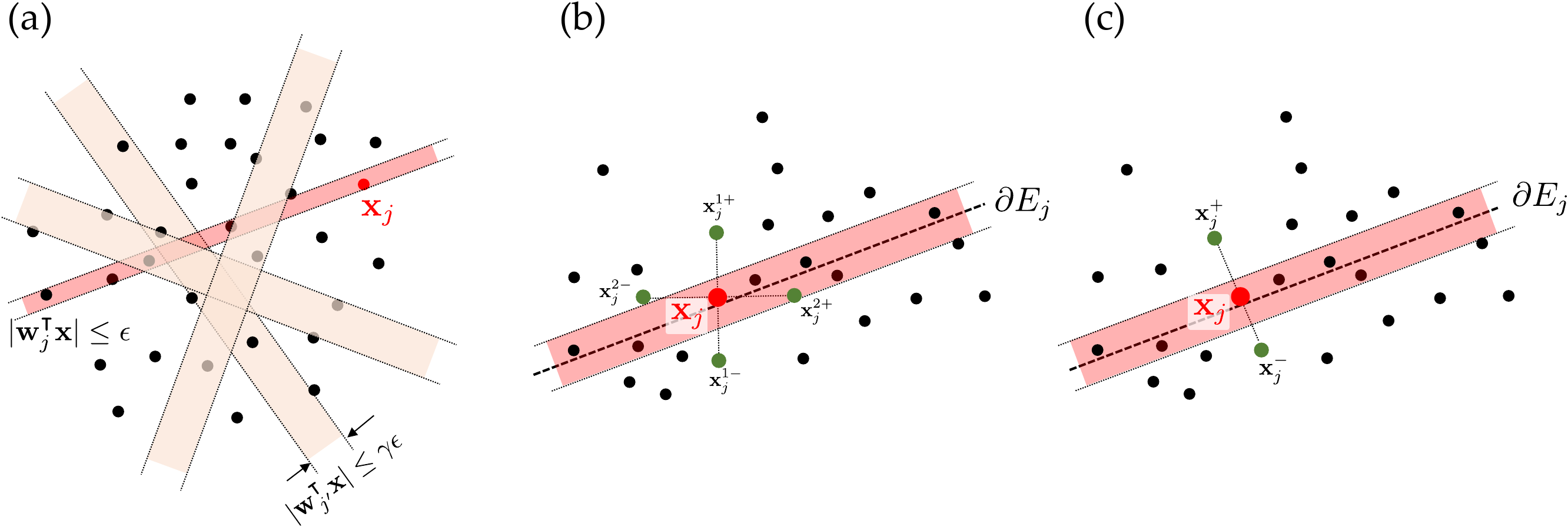}
    \caption{\textbf{(a)} Lemma~\ref{lemma:escape}: either there exists a node $j'$ aligned with node $j$, or there exists $\vx_j$ so that $|\vw_j^\t\vx| \le \epsilon$ but for all $j'\neq j$, $|\vw_{j'}^\t\vx| > \gamma\epsilon$, i.e., $\vx_j \in I_j(\epsilon) \backslash \cup_{j'\neq j} I_{j'}(\gamma\epsilon)$. 
    \textbf{(c)} Lemma~\ref{lemma:gradient-large}. For such $\vx_j$, since each data point is augmented according to teacher-agnostic augmentation (Definition~\ref{def:augmentation}), $\vx_j$ also has its own augmentation $\vx_j^{k\pm}$. Due to the property of $\vx_j$, we know $\vx_j$ and its augmentation $\vx_j^{k\pm}$ are on the same side of gradient function $h(\vx)$ and thus each node $j'\neq j$ are all linear. By checking the gradient $h$ evaluated on $\vx_j$ and its augmentation $\vx_j^{k\pm}$, we could show at least one gradient has large magnitude by contradiction. \textbf{(c)} In teacher-aware case, contradiction could follow with only 2 augmented samples, if they are constructed alone teacher $j$'s weight $\tilde \vw^*_j$.} 
    \label{fig:lemma-illustration}
\end{figure}

\def\flip{\mathrm{flip}}

\begin{lemma}
\label{lemma:escape}
Consider $K$ hyper planes, each with regular weight $\{\vw_j\}_{j=1}^K$. $\gamma > 0$ and $\epsilon > 0$ are constants. Consider one hyper-plane $j$. For a constrained $\eta$-dataset $D \subseteq I_j(\epsilon)$, if 
\begin{itemize}
    \item[(a)] $N_{D} \ge (\gamma+3) K(d+1)$.
    \item[(b)] For $j'\neq j$, $j'$ is not aligned with $j$ by error $(M_1\epsilon, M_2\epsilon)$. Here 
    \begin{equation}
        M_1 = (\gamma+3)\eta K, \quad M_2 = 1 + \gamma + (\gamma+3)\eta K\left(|b_j| + \sqrt{\frac{(3+\gamma)\mu K}{1+\gamma}} \right)
    \end{equation} 
\end{itemize}
Then there exists $\vx \in D$ so that for any $j'\neq j$, $\vx\notin I_{j'}(\gamma\epsilon)$.
\end{lemma}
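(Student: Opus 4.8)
\emph{Proof plan.} The conclusion says $D\not\subseteq\bigcup_{j'\neq j}I_{j'}(\gamma\epsilon)$, so by a union bound it suffices to establish, for each fixed $j'\neq j$, the per-plane count
\begin{equation}
N_D[I_{j'}(\gamma\epsilon)] \le \frac{\gamma+1}{(\gamma+3)K}N_D + (d+1). \label{eq:plan-target}
\end{equation}
Granting \eqref{eq:plan-target}, summing the $K-1$ bands gives $N_D[\bigcup_{j'\neq j}I_{j'}(\gamma\epsilon)] \le \frac{\gamma+1}{\gamma+3}N_D + K(d+1)$, and condition (a), i.e.\ $N_D\ge(\gamma+3)K(d+1)$, makes the additive term at most $N_D/(\gamma+3)$, so the union covers at most $\frac{\gamma+2}{\gamma+3}N_D < N_D$ samples and leaves one uncovered. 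Thus the whole task reduces to \eqref{eq:plan-target}.

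To count $N_D[I_{j'}(\gamma\epsilon)]$ I would exploit that every $\vx=[\tilde\vx,1]\in D$ lies in the thin slab $I_j(\epsilon)$. Decomposing $\tilde\vw_{j'} = \cos\tilde\theta_{jj'}\,\tilde\vw_j + \sin\tilde\theta_{jj'}\,\tilde\vu_{j'}$ with $\tilde\vu_{j'}\perp\tilde\vw_j$ the unit vector of Lemma~\ref{lemma:relation}, and writing $\tilde\vw_j^\t\tilde\vx = -b_j + \delta$ with $|\delta|\le\epsilon$, yields the identity
\begin{equation}
\vw_{j'}^\t\vx = \sin\tilde\theta_{jj'}\,\tilde\vu_{j'}^\t\tilde\vx + (b_{j'} - \cos\tilde\theta_{jj'}\,b_j) + \cos\tilde\theta_{jj'}\,\delta, \label{eq:plan-decomp}
\end{equation}
which decouples the (frozen) coordinate along $\tilde\vw_j$ from the perpendicular coordinate $\tilde\vu_{j'}^\t\tilde\vx$. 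Since hypothesis (b) — $j'$ is \emph{not} $(M_1\epsilon,M_2\epsilon)$-aligned with $j$ — is a disjunction, I split into two cases matching the two clauses.

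In the \emph{separated-angle} case $\sin\tilde\theta_{jj'} > M_1\epsilon$, rearranging \eqref{eq:plan-decomp} shows that $\vx\in I_{j'}(\gamma\epsilon)$ forces the regular, $\tilde\vw_j$-perpendicular weight $\vw' = [\tilde\vu_{j'},\,(b_{j'}-\cos\tilde\theta_{jj'}b_j)/\sin\tilde\theta_{jj'}]$ to obey $|(\vw')^\t\vx| \le (\gamma+1)\epsilon/\sin\tilde\theta_{jj'} < (\gamma+1)\epsilon/(M_1\epsilon)$; the constrained $\eta$-dataset property then caps the number of such samples by $\eta\cdot\frac{(\gamma+1)\epsilon}{M_1\epsilon}N_D + (d+1)$, which equals the bound \eqref{eq:plan-target} precisely because $M_1 = (\gamma+3)\eta K$. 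In the complementary \emph{near-parallel} case $\sin\tilde\theta_{jj'}\le M_1\epsilon$, clause (b) forces the bias gap $|b_j-b_{j'}|>M_2\epsilon$. Solving \eqref{eq:plan-decomp} for $\tilde\vu_{j'}^\t\tilde\vx$ and using a reverse triangle inequality with $1-\cos\tilde\theta_{jj'}\le\sin\tilde\theta_{jj'}\le M_1\epsilon$ (the angle taken acute, which the orientation convention permits) gives, for every $\vx\in I_{j'}(\gamma\epsilon)$, the lower bound $|\tilde\vu_{j'}^\t\tilde\vx| > (M_2 - M_1|b_j| - (\gamma+1))/M_1 = \sqrt{\frac{(3+\gamma)\mu K}{1+\gamma}}$ — exactly the quantity baked into $M_2$. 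Applying the variance/Chebyshev bound \eqref{eq:mu} to the zero-bias direction $\tilde\vu_{j'}$ then caps the count by $\mu\cdot\frac{1+\gamma}{(3+\gamma)\mu K}N_D = \frac{1+\gamma}{(3+\gamma)K}N_D$, again within \eqref{eq:plan-target}.

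The conceptual core, and the step I expect to be most delicate, is the near-parallel case: the perpendicular-band reduction of the first case degenerates as $\sin\tilde\theta_{jj'}\to 0$ (one divides by it), so one must switch control mechanisms entirely, converting the \emph{bias} gap into the statement that every offending sample is anomalously far from the origin along $\tilde\vu_{j'}$, which \eqref{eq:mu} penalizes. Forcing both cases to yield the \emph{same} clean bound \eqref{eq:plan-target} is exactly what pins down the forms of $M_1$, $M_2$ and the threshold $\sqrt{(3+\gamma)\mu K/(1+\gamma)}$; the residual work is bookkeeping — the sign/orientation convention keeping $\cos\tilde\theta_{jj'}\ge 0$, and checking that the numerator $M_2 - M_1|b_j| - (\gamma+1)$ is positive so the threshold is meaningful.
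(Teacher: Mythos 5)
Your proposal is correct and follows essentially the same route as the paper's proof: the same decomposition of $\tilde\vw_{j'}$ along $\tilde\vw_j$ and the perpendicular unit vector from Lemma~\ref{lemma:relation}, the same two-case split on the negated alignment condition (large $\sin\tilde\theta_{jj'}$ handled by the constrained $\eta$-dataset bound on the perpendicular band, near-parallel handled by converting the bias gap into a large $|\tilde\vu_{j'}^\t\tilde\vx|$ controlled by Eqn.~\ref{eq:mu}), and the same final union-bound count using condition (a) to absorb the $K(d+1)$ additive terms. The only differences are cosmetic bookkeeping in the last summation step.
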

\begin{proof}
Without loss of generality, we assume any angle $\tilde\theta_{jj'} \in [0, \pi/2]$. If not, we can always flip the hyper plane by sending $\vw = [\tilde\vw, b]$ to $\mathrm{flip}(\vw) = [-\tilde\vw, b]$. This gives $I_\vw(\epsilon) = I_{\flip(\vw)}(\epsilon)$ and keep the definition of $\epsilon$-alignment: $(\vw_1, \vw_2)$ is $\epsilon$-aligned if and only if $(\vw_1, \flip(\vw_2))$ is $\epsilon$-aligned, due to the fact that $\sin\theta = \sin(\pi - \theta)$.

For any $j'\neq j$, since $j'$ is not aligned with $j$ by error $(M_1\epsilon, M_2\epsilon)$, we know that either of the two cases hold.
\begin{itemize}
    \item[1)] $\sin\tilde\theta_{jj'} > M_1\epsilon$.
    \item[2)] $\sin\tilde\theta_{jj'} \le M_1\epsilon$ but $|b_j-b_{j'}| > M_2\epsilon$.
\end{itemize}
\textbf{Case 1}: By Lemma~\ref{lemma:relation} we know that there exists $\tilde\vu_{j'} \perp \vw_j$ so that 
\begin{equation}
    \tilde\vw_{j'} = \cos\tilde\theta_{jj'} \tilde\vw_j + \sin\tilde\theta_{jj'} \tilde\vu_{j'}
\end{equation}
Let $b_u = (b_{j'} - b_j \cos\tilde\theta_{jj'}) / \sin\tilde\theta_{jj'}$ and $\vu_{j'} = [\tilde\vu_{j'}, b_u]$. Then we have:
\begin{equation}
    \vw_{j'} = \cos\tilde\theta_{jj'}\vw_j + \sin\tilde\theta_{jj'}\vu_{j'} \label{eq:w-tilde}
\end{equation}
Notice that we have the following fact: if $\vx \in D \cap I_{j'}(\gamma\epsilon) \subseteq I_j(\epsilon) \cap I_{j'}(\gamma\epsilon)$, then 
\begin{equation}
    |\vu_{j'}^\t\vx| \le \frac{1}{\sin\tilde\theta_{jj'}}\left[|\vw_{j'}^\t\vx| + |\cos\tilde\theta_{jj'}||\vw_j^\t\vx|\right] \le \frac{(1+\gamma)\epsilon}{M_1\epsilon} = \frac{1+\gamma}{(3+\gamma)\eta K}
\end{equation}
Therefore, by the definition of $\eta$-dataset, we have: 
\begin{equation}
    N_{D}\left[I_{j'}(\gamma\epsilon)\right] \le N_{D}\left[|\vu_{j'}^\t\vx| \le \frac{1+\gamma}{(3+\gamma)\eta K}\right] \le \frac{1+\gamma}{(3+\gamma)K} N_{D} + (d + 1)
\end{equation}

\textbf{Case 2}: Notice the following fact: if $\vx \in D \cap I_{j'}(\gamma\epsilon) \subseteq I_j(\epsilon) \cap I_{j'}(\gamma\epsilon)$, then from Eqn.~\ref{eq:w-tilde} we know:
\begin{equation}
    \tilde\vw_{j'}^\t\tilde\vx = \cos\tilde\theta_{jj'} \tilde\vw_j^\t\tilde\vx + \sin\tilde\theta_{jj'} \tilde\vu_{j'}^\t\tilde\vx
\end{equation}
which means that
\begin{equation}
    \vw_{j'}^\t\vx - b_{j'} = \cos\tilde\theta_{jj'} (\vw_j^\t\vx - b_j) + \sin\tilde\theta_{jj'} \tilde\vu_{j'}^\t\tilde\vx
\end{equation}
Therefore, we have:
\begin{eqnarray}
    |\tilde\vu_{j'}^\t\tilde\vx| &=& \frac{1}{\sin\tilde\theta_{jj'}}\Big|\vw_{j'}^\t\vx - b_{j'} - \cos\tilde\theta_{jj'} ( \vw_j^\t\vx - b_j ) \Big| \\
    &\ge& \frac{1}{\sin\tilde\theta_{jj'}} \left[ |b_j - b_{j'}| - (1 - \cos\tilde\theta_{jj'}) |b_j| - |\vw_{j'}^\t\vx| - |\vw_j^\t\vx| \right]
\end{eqnarray}
Note that since $\tilde\theta_{jj'}\in [0, \pi/2]$, we have $1 - \cos\tilde\theta_{jj'} \le 1 - \cos^2\tilde\theta_{jj'} = \sin^2\tilde\theta_{jj'} \le \sin\tilde\theta_{jj'} \le M_1\epsilon$. Therefore, 
\begin{equation}
    |\tilde\vu_{j'}^\t\tilde\vx| \ge \frac{M_2 - 1 - \gamma}{M_1} - |b_j| = \sqrt{\frac{(3+\gamma)\mu K}{1+\gamma}} 
\end{equation}

Therefore, we have:
\begin{equation}
    N_D[I_{j'}(\gamma\epsilon)] \le N_D\left[|\tilde\vu_{j'}^\t\tilde\vx| \ge \sqrt{\frac{(3+\gamma)\mu K}{1+\gamma}}\right] \le \frac{1+\gamma}{(3+\gamma)K} N_{D_j}
\end{equation}

Combining the two cases, since $N_{D_j} \ge (3+\gamma)K(d+1)$, we know that 
\begin{eqnarray}
    \sum_{j'\neq j} N_D\left[I_{j'}(\gamma\epsilon)\right] &\le& \frac{1+\gamma}{3+\gamma}\frac{K-1}{K} N_D + (K-1)(d+1) \\
    &<& \frac{1+\gamma}{3+\gamma} N_{D_j} + \frac{1}{3+\gamma} N_{D_j} \\
    &=& \left(1 - \frac{1}{3+\gamma}\right) N_D < N_D
\end{eqnarray}
Moreover, since $N_D / (3+\gamma) \ge K(d+1) \ge 1$, so there exists at least one $\vx \in D$ so that $\vx$ doesn't fall into any bucket $I_{j'}(\gamma\epsilon)$. This means that for any $j'\neq j$, $\vx\notin I_{j'}(\gamma\epsilon)$ and the proof is complete.
\end{proof}

\textbf{Remark}. Note that the constant $|b_j|$ in $M_2$ can be smaller since we could always use a stronger bound $1 - \cos\theta \le 1 - \cos^2\theta = \sin^2\theta$ and as a result, $M_2$ would contain $|b_j|\epsilon$. For small $\epsilon$, this term is negligible.  
\begin{lemma}
\label{lemma:gradient-large}
Define 
\begin{equation}
    h(\vx) = \sum_{j'=1}^K c_{j'} \sigma(\vw_{j'}^\t\vx) + c_{\sbullet} \vw^\t_{\sbullet} \vx \label{eq:gradient}
\end{equation}
Suppose there exists $\vx_j \in I_j(\epsilon_l)\backslash \cup_{j' \neq j} I_{j'}(\epsilon_h)$ with $\epsilon_l < \epsilon_h$ and there exists a vector $\tilde \ve$ so that $\epsilon_l < \epsilon_0 \le \vw_j^\t\tilde \ve \le \epsilon_h$. Construct $2$ datapoints $\vx_j^\pm = \vx_j \pm \tilde\ve$ and set $D = \{\vx_j, \vx^\pm_j\}$. Then there exists $\vx\in D$ so that $|h(\vx)| > \frac{|c_j|}{5}(\epsilon_0 - \epsilon_l)$. 
\end{lemma}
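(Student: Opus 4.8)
\emph{Proof proposal.} The plan is to isolate the single nonlinear ``kink'' that node $j$ contributes to $h$ by taking a discrete \emph{second difference} of $h$ along the displacement $\tilde\ve$, so that every affine contribution cancels and only the node-$j$ term survives. I parametrize the three points as $\vx(t) = \vx_j + t\tilde\ve$ for $t\in\{-1,0,1\}$, so that $\vx(0)=\vx_j$ and $\vx(\pm 1)=\vx_j^\pm$, and I write $a_{j'} = \vw_{j'}^\t\vx_j$ and $\delta_{j'} = \vw_{j'}^\t\tilde\ve = \tilde\vw_{j'}^\t\tilde\ve$ (the bias drops out since $\tilde\ve$ has no bias coordinate), so that $\vw_{j'}^\t\vx(t) = a_{j'} + t\delta_{j'}$.

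The first step is to argue that every node $j'\neq j$ stays on one linear piece of $\sigma$ across the three points. Because $\vx_j\notin\cup_{j'\neq j} I_{j'}(\epsilon_h)$ we have $|a_{j'}| > \epsilon_h$ for all $j'\neq j$, while regularity of the weights gives $|\delta_{j'}| = |\tilde\vw_{j'}^\t\tilde\ve|\le\|\tilde\ve\|$; since the augmentation displacement is small enough that $\|\tilde\ve\|\le\epsilon_h$, the values $a_{j'}\pm\delta_{j'}$ keep the sign of $a_{j'}$. Hence $\sigma(\vw_{j'}^\t\vx(t)) = s_{j'}(a_{j'}+t\delta_{j'})$ with a fixed slope $s_{j'}\in\{1,\cleaky\}$, which is affine in $t$, and the term $c_{\sbullet}\vw_{\sbullet}^\t\vx(t)$ is affine in $t$ as well. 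Any affine $f(t)$ satisfies $f(1)-2f(0)+f(-1)=0$, so the second difference annihilates all of these and leaves only node $j$:
\begin{equation}
h(\vx_j^+) - 2h(\vx_j) + h(\vx_j^-) = c_j\big[\sigma(a_j+\delta_j) - 2\sigma(a_j) + \sigma(a_j-\delta_j)\big] =: c_j S.
\end{equation}

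The second step is to lower-bound $|S|$ using $|a_j|\le\epsilon_l$ and $\delta_j = \vw_j^\t\tilde\ve\ge\epsilon_0>\epsilon_l$. Since $\delta_j>|a_j|$, the outer arguments straddle the kink, $a_j+\delta_j>0$ and $a_j-\delta_j<0$, so $\sigma(a_j+\delta_j)=a_j+\delta_j$ and $\sigma(a_j-\delta_j)=\cleaky(a_j-\delta_j)$. Splitting on the sign of $a_j$ (which fixes $\sigma(a_j)$) and simplifying gives $S=(1-\cleaky)(\delta_j-a_j)$ when $a_j\ge 0$ and $S=(1-\cleaky)(\delta_j+a_j)$ when $a_j<0$, and in both cases $S\ge(1-\cleaky)(\epsilon_0-\epsilon_l)>0$. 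The triangle inequality then yields $|c_j|\,|S| \le |h(\vx_j^+)|+2|h(\vx_j)|+|h(\vx_j^-)| \le 4\max_{\vx\in D}|h(\vx)|$, whence $\max_{\vx\in D}|h(\vx)| \ge \frac{1-\cleaky}{4}|c_j|(\epsilon_0-\epsilon_l)$; this is exactly $\frac{|c_j|}{4}(\epsilon_0-\epsilon_l)$ for ReLU, and the looser constant $\frac{1}{5}$ stated in the lemma leaves room for the leaky factor.

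I expect the main obstacle to be the first step rather than the (routine) second-difference algebra: one must certify that the perturbed points $\vx_j^\pm$ cross none of the other boundaries $\partial E_{j'}$, which is precisely where the smallness of $\tilde\ve$ (fixed by the augmentation constant $c$ and the $K^{3/2}$ scaling) must be matched against the margin $\epsilon_h$ guaranteed by $\vx_j\notin I_{j'}(\epsilon_h)$. Tracking this quantitatively, while keeping the lower bound $\vw_j^\t\tilde\ve\ge\epsilon_0$ compatible with the upper bound $\|\tilde\ve\|\le\epsilon_h$ in both the teacher-agnostic construction (axis-aligned, costing a $1/\sqrt{d}$ projection and hence the extra $\sqrt{d}$ factor) and the teacher-aware construction (along $\tilde\vw^*_j$), is the delicate part; everything else follows from the piecewise linearity of $\sigma$.
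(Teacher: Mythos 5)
Your proposal is correct and follows essentially the same route as the paper's proof: the second difference $h(\vx_j^+)-2h(\vx_j)+h(\vx_j^-)$ annihilates all affine contributions (nodes $j'\neq j$ stay on one linear piece because $|\vw_{j'}^\t\vx_j|>\epsilon_h$ dominates the displacement), isolates the node-$j$ kink of size at least $(1-\cleaky)(\epsilon_0-\epsilon_l)$, and the triangle inequality forces one of the three evaluations to be large. Your explicit tracking of the Leaky ReLU factor and of the implicit requirement $\|\tilde\ve\|\le\epsilon_h$ (which the paper also uses, stating the leaky case only as a remark) is a minor sharpening, not a different argument.
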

\begin{proof}
We show that the three points $\vx_j$ and $\vx_j^\pm$ are on the same side of $\partial E_{j'}$ for any $j'\neq j$. This can be achieved by checking whether $(\vw_{j'}^\t \vx_j)(\vw_{j'}^\t \vx_j^\pm) \ge 0$ (Fig.~\ref{fig:lemma-illustration}(b) and (c)):
\begin{eqnarray}
(\vw_{j'}^\t \vx_j)(\vw_{j'}^\t \vx_j^\pm) &=& (\vw_{j'}^\t \vx_j)\left[\vw_{j'}^\t (\vx_j \pm \tilde\ve)\right] \\
&=& (\vw_{j'}^\t \vx_j)^2 \pm (\vw_{j'}^\t \vx_j) \vw_{j'}^\t\tilde\ve \\
&=& |\vw_{j'}^\t\vx_j| (|\vw_{j'}^\t\vx_j| \pm \vw_{j'}^\t\tilde\ve)
\end{eqnarray}
Since $\vx_j \notin I_{j'}(\epsilon_h)$ and $\vw_{j'}^\t\tilde\ve \le \epsilon_h$, we have:
\begin{equation}
    |\vw_{j'}^\t\vx_j| \pm \vw_{j'}^\t\tilde\ve > \epsilon_h \pm \epsilon_h \ge 0
\end{equation}
Therefore, $(\vw_{j'}^\t \vx_j)(\vw_{j'}^\t \vx_j^\pm) \ge 0$ and the three points $\vx_j$ and $\vx_j^\pm$ are on the same side of $\partial E_{j'}$ for any $j' \neq j$. 

If the conclusion is not true, then consider $h(\vx_j^+) + h(\vx_j^-) - 2h(\vx_j)$. Since $\vx_j^+ + \vx_j^- = 2\vx_j$, we know that all terms related to $\vw_{\sbullet}$ and $\vw_{j'}$ with $j\neq j$ will cancel out, due to the fact that they are in the same side of the boundary $\partial E_{j'}$ and thus behave linearly. Therefore,
\begin{equation}
    h(\vx_j^+) + h(\vx_j^-) - 2h(\vx_j) = c_j\left[\sigma(\vw_j^\t\vx_j^+) + \sigma(\vw_j^\t\vx_j^-) - 2\sigma(\vw_j^\t\vx_j)\right] 
\end{equation}
Since $|\vw_j^\t\vx_j| \le \epsilon_l$ and $\vw_j^\t\tilde\ve \ge \epsilon_0 > \epsilon_l$, it is always the case that:
\begin{itemize}
\item[(a)] $\vw_j^\t\vx^+_j = \vw_j^\t\vx_j + \vw_j^\t\tilde\ve > 0$ and by ReLU properties $\sigma(\vw_j^\t\vx^+_j) = \vw_j^\t\vx^+_j$. 
\item[(b)] $\vw_j^\t\vx^-_j = \vw_j^\t\vx_j - \vw_j^\t\tilde\ve < 0$ so by ReLU properties $\sigma(\vw_j^\t\vx^-_j) = 0$.
\end{itemize}
Therefore if $\vw_j^\t\vx_j \ge 0$ then:
\begin{eqnarray}
    |h(\vx_j^+) + h(\vx_j^-) - 2h(\vx_j)| &=& |c_j\vw_j^\t (\vx^+_j - 2\vx_j)| = |c_j\vw_j^\t (\vx_j + \tilde\ve - 2\vx_j)| \\
    &=& |c_j\vw_j^\t (\tilde\ve - \vx_j)| \ge |c_j (\epsilon_0 - \vw_j^\t\vx_j)| \\
    &\ge& |c_j|(\epsilon_0 - \epsilon_l)
\end{eqnarray}
if $\vw_j^\t\vx_j < 0$ then:
\begin{equation}
    |h(\vx_j^+) + h(\vx_j^-) - 2h(\vx_j)| = |c_j\vw_j^\t \vx^+_j| = |c_j\vw_j^\t(\vx_j + \tilde\ve)| \ge |c_j|(\epsilon_0 - \epsilon_l)
\end{equation}

On the other hand, from gradient condition, we have: 
\begin{equation}
    |h(\vx_j^+) + h(\vx_j^-) - 2h(\vx_j)| \le \frac{4}{5}|c_j|(\epsilon_0 - \epsilon_l)
\end{equation}
which is a contradiction. 
\end{proof}
For Leaky ReLU, the proof is similar except that the final condition has an additional $1-\cleaky$ factor.

\begin{assumption}
\label{assumption:main-appendix}
(a) Two teacher nodes $j\neq j'$ are not $\epsilon_0$-aligned. (b) The boundary band $I_j(\epsilon)$ of each teacher $j$ overlaps with the dataset: 
\begin{equation}
N_D[I_j(\epsilon)] \ge \tau \epsilon N_D 
\end{equation}
\end{assumption}

\begin{theorem}[Two-layer Specialization with Polynomial Samples]
\label{thm:finite-sample-recovery-appendix}
Let $K = m_1 + n_1$. For $0 < \epsilon \le \epsilon_0$, for any finite dataset $D$ with $N = \Theta( cK^{5/2}d^2\tau^{-1}\epsilon^{-1}\kappa^{-1})$, for any teacher satisfying Assumption~\ref{assumption:main} and student trained on $D' = \aug(D)$ whose weight $\hW$ satisfies:
\begin{itemize}
    \item[(1)] For $\epsilon\in[0, \epsilon_0]$, the hyperplane band $I_j(\epsilon)$ of a teacher is observed by a student node $k$: $N_D[I_j(\epsilon) \cap E_k] \ge \kappa N_D[I_j(\epsilon)]$;
    \item[(2)] Small gradient: $\|\vg_1(\vx, \hW)\|_\infty \le \frac{\alpha_{kj}}{5cK^{3/2}\sqrt{d}} \epsilon,\,\vx\in D'$,
\end{itemize}
where $c > 0$ is a constant related to dataset properties. Then there exists a student $k'$ so that $(j, k')$ is $\epsilon$-aligned.
\end{theorem}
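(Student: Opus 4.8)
The plan is to argue by contradiction and reduce the statement to the two workhorse lemmas already in hand: the escape Lemma~\ref{lemma:escape}, which produces a data point lying in a thin band around teacher $j$'s boundary yet bounded away from every competing hyperplane, and Lemma~\ref{lemma:gradient-large}, which converts such a clean point plus its augmentations into a lower bound on a per-node gradient. First I would set up the per-node gradient. In the two-layer case $A_1,B_1$ are constant (no gating at the top), so by Lemma~\ref{thm:same-layer-relationship} the $k$-th entry of $\vg_1$ is $g_k(\vx)=z_k(\vx)\,h_k(\vx)$ with $h_k(\vx)=\valpha_k^\t\vf^*(\vx)-\vbeta_k^\t\vf(\vx)$. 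This $h_k$ is exactly of the form $\sum_{j'}c_{j'}\sigma(\vw_{j'}^\t\vx)+c_\sbullet\vw_\sbullet^\t\vx$ demanded by Lemma~\ref{lemma:gradient-large}, where the $K=m_1+n_1$ hyperplanes are the teacher and student first-layer nodes and the coefficient of the teacher-$j$ term is $c_j=\alpha_{kj}$ (taken positive in condition (2)). Wherever node $k$ is active one has $g_k=h_k$, hence $\|\vg_1(\vx,\hW)\|_\infty\ge|h_k(\vx)|$ at such a point.

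Next I would assume, for contradiction, that no student node is $\epsilon$-aligned with $j$; together with Assumption~\ref{assumption:main}(a) this means no hyperplane other than $j$ is $\epsilon$-aligned with it. I then run Lemma~\ref{lemma:escape} at a fine band scale $\epsilon_l$ at most of order $\epsilon/K^{3/2}$: this scale is dictated by requiring $M_2\epsilon_l\le\epsilon$ (so that non-$\epsilon$-alignment upgrades to the stronger non-$(M_1\epsilon_l,M_2\epsilon_l)$-alignment the lemma needs), and $M_2$ is of order $K^{3/2}$ through its $\sqrt{\mu K}$ term. Restricting the dataset to $I_j(\epsilon_l)\cap E_k$ and combining the observation condition (1) with Assumption~\ref{assumption:main}(b), the polynomial size $N=\Theta(cK^{5/2}d^2\tau^{-1}\epsilon^{-1}\kappa^{-1})$ guarantees that enough samples survive to clear the lemma's floor $(\gamma+3)K(d+1)$, while the $(\eta,\mu)$-dataset property (and the constrained $\eta$-dataset property of the restriction) bounds how many leak into competing bands. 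The output is a sample $\vx_j\in I_j(\epsilon_l)\cap E_k$ with $|\vw_{j'}^\t\vx_j|>\gamma\epsilon_l$ for every $j'\neq j$.

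Finally I would feed $\vx_j$ into Lemma~\ref{lemma:gradient-large}. Since $\|\tilde\vw_j\|=1$, some axis $u$ has $|w_{j,u}|\ge 1/\sqrt d$, so the teacher-agnostic augmentation direction $\tilde\ve=2\epsilon\tilde\ve_u/(cK^{3/2})$ satisfies $\vw_j^\t\tilde\ve\ge 2\epsilon/(cK^{3/2}\sqrt d)$; this pigeonhole loss is precisely where the extra $\sqrt d$ enters, and the teacher-aware variant avoids it by taking $\tilde\ve\parallel\tilde\vw^*_j$. Choosing $\gamma$ so that $\gamma\epsilon_l\ge\|\tilde\ve\|$, the three points $\vx_j,\vx_j^\pm=\vx_j\pm\tilde\ve$ (all members of $D'$) lie on the same side of every $\partial E_{j'}$ with $j'\neq j$ while straddling $\partial E^*_j$, and in particular remain inside $E_k$ because $\vx_j$ is more than $\gamma\epsilon_l$ from $\partial E_k$. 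Lemma~\ref{lemma:gradient-large} then forces $|h_k|\gtrsim\tfrac{\alpha_{kj}}{5}(\epsilon_0-\epsilon_l)$ at one of them, with $\epsilon_0-\epsilon_l$ of order $\epsilon/(cK^{3/2}\sqrt d)$; as $g_k=h_k$ at active points, this contradicts condition (2). The contradiction delivers an $\epsilon$-aligned student $k'$.

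The hard part will be the simultaneous calibration of the three scales — the escape band $\epsilon_l$, the escape margin $\gamma$, and the augmentation radius — so that all of the following hold at once: non-$\epsilon$-alignment implies the non-$(M_1\epsilon_l,M_2\epsilon_l)$-alignment the escape lemma consumes, the augmentation is large enough to trigger the gradient lower bound yet small enough to keep all three points on one side of each competing boundary, and the surviving sample count clears the escape floor. Threading how the $K^{3/2}$ and $\sqrt d$ factors propagate through $M_2$ (via its $\sqrt{\mu K}$ term) and through the $1/\sqrt d$ pigeonhole loss, and checking that the final polynomial $N$ absorbs the additive $(d+1)$ slack in the $\eta$-dataset counting, is where essentially all the quantitative work resides.
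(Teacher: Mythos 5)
Your proposal follows essentially the same route as the paper's proof: restrict to the band $I_j(\epsilon')$ with $\epsilon'=\epsilon/(cK^{3/2}\sqrt{d})$, invoke Lemma~\ref{lemma:escape} (with $\gamma=2\sqrt{d}$) to either obtain the aligned student directly or extract a clean point $\vx_j$, then use the axis-pigeonhole to pick an augmentation direction with $\vw_j^{*\t}\tilde\ve_u\ge 1/\sqrt{d}$ and apply Lemma~\ref{lemma:gradient-large} to contradict the small-gradient condition. The scale calibration you flag as the hard part is exactly what the paper's choice of $\epsilon'$, $\gamma$, and $c=\Theta(\max_j\mu_{\vw^*_j}\max_j\eta_{\vw^*_j})$ accomplishes, so the proposal is correct and matches the paper's argument.
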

\begin{proof}
Let $K = m_1 + n_1$. Let $\epsilon' = \epsilon / c K^{3/2}\sqrt{d}$ with some constant $c = \Theta(\max_j \mu_{\vw^*_j} \max_j \eta_{\vw^*_j}) > 0$. 

We construct a basic dataset $D$ with $N = \Theta(cK^{5/2}d^2\epsilon^{-1}\tau^{-1}\kappa^{-1})$ samples and use the augmentation operator $\aug(D)$:
\begin{equation}
    \aug(D) = \{\vx^\pm_k = \vx \pm 2\epsilon\tilde\ve_u / cK^{3/2},\ \ \vx \in D,\ \ u = 1,\ldots,d\} \cup D
\end{equation}
where $\tilde\ve_k$ is axis-aligned unit directions with $\|\tilde\ve_k\| = 1$. It is clear that $|\aug(D)| = (2d+1)|D|$. Let $D' = \aug(D)$.
    
Let $j$ be some teacher node. Consider a slice of basic dataset $D \cap I_j(\epsilon')$. By Assumption~\ref{assumption:main-appendix}(b), we know that 
\begin{equation}
    N_D[I_j(\epsilon')] = N[D \cap I_j(\epsilon')] \ge \tau \epsilon' N_D = \cO(K d^{3/2}\kappa^{-1})
\end{equation}
and thus for $N_D[I_j(\epsilon') \cap E_k]$, we know that  
\begin{equation}
    N_D[I_j(\epsilon') \cap E_k] \ge \kappa N_D[I_j(\epsilon')] = \cO(K d^{3/2})
\end{equation}
With a sufficiently large constant in $N$, we have $N_D[I_j(\epsilon') \cap E_k] \ge (\gamma + 3)K(d+1)$ with $\gamma = 2\sqrt{d}$. We then apply Lemma~\ref{lemma:escape}, which leads the two following cases:

\textbf{Case 1}. There exists weight $k'$ so that the following alignment condition holds:
\begin{equation}
    \sin\tilde\theta_{jk'} \le M'_{1j}\epsilon' = M_{1j}\epsilon, \quad |b^*_j - b_{k'}| \le M'_{2j}\epsilon' = M_{2j}\epsilon
\end{equation}
Where $M'_{1j} = \Theta(K\gamma)$ and $M'_{2j} = \Theta(K^{3/2}\gamma)$. Therefore, $M_{1j} = \Theta(K\gamma/cK^{3/2}\sqrt{d}) = o(1)$ and $M_{2j} = \Theta(1)$. Choosing the constant $c > 0$ so that we have $M_{1j} \le 1$ and $M_{2j} \le 1$ and thus 
\begin{equation}
    \sin\tilde\theta_{jk'} \le \epsilon, \quad |b^*_j - b_{k'}| \le \epsilon,
\end{equation}
which means that $(j, k')$ are $\epsilon$-aligned. Note that no other teacher is $\epsilon$-aligned with $j$. So $k'$ has to be a student node and the proof is complete. 

\textbf{Case 2}. If the alignment condition doesn't hold, then according to Lemma~\ref{lemma:escape}, there exists $\vx_j$ so that (note that $\gamma = 2\sqrt{d}$):
\begin{equation}
\vx_j \in I_j(\epsilon') \backslash \cup_{j'\neq j} I_{j'}(2\sqrt{d}\epsilon'). \label{eq:exist-x}
\end{equation}
Here all $j'$ includes all teacher and student nodes (a total of $K$ nodes), excluding the current node $j$ under consideration. Since $\{\tilde\ve_\vu\}^d_{u=1}$ forms orthonormal bases, there exists at least one $u$ so that $1 \ge \vw_j^{*\t}\tilde\ve_u \ge 1/\sqrt{d}$ (with proper sign flipping of $\tilde\ve_u$). Let $\tilde\ve = 2\epsilon\tilde\ve_u / cK^{3/2} = 2\epsilon'\sqrt{d}\tilde\ve_u$, we have $2\epsilon' \le \vw_j^{*\t}\tilde\ve \le 2\sqrt{d}\epsilon'$. Applying Lemma~\ref{lemma:gradient-large} and we know the additional samples required in the lemma is already in $\aug(D)$. Therefore, that there exists $\vx \in \aug(D_j) \subset D$ so that $|h(\vx)| > \frac{|\alpha_{kj}|}{5}(2\epsilon' - \epsilon') = \frac{|\alpha_{kj}|}{5cK^{3/2}\sqrt{d}}\epsilon$, which is a contradiction. 
\end{proof}

\begin{theorem}[Two-layer Specialization with Teacher-aware Dataset with Polynomial Samples] 
\label{thm:finite-sample-teacher-aware-recovery-appendix}
For $0 < \epsilon \le \epsilon_0$, for any finite dataset $D$ with $N = \Theta(cK^{5/2} d\tau^{-1}\epsilon^{-1})$, given a teacher network $\cW^*$ satisfying Assumption~\ref{assumption:main} and student trained on $D' = \aug(D, \cW^*)$ whose weight $\hW$ satisfies
\begin{itemize}
    \item[(1)] For $\epsilon\in[0, \epsilon_0]$, the hyperplane band $I_j(\epsilon)$ of a teacher is observed by a student node $k$: $N_D[I_j(\epsilon) \cap E_k] \ge \kappa N_D[I_j(\epsilon)]$;
    \item[(2)] Small gradient: $\|\vg_1(\vx, \hW)\|_\infty \le \frac{\alpha_{kj}}{5cK^{3/2}} \epsilon$, for $\vx \in D'$,
\end{itemize}
then there exists a student $k'$ so that $(j, k')$ is $\epsilon$-aligned.
\end{theorem}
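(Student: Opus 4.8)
The plan is to follow the architecture of the proof of Theorem~\ref{thm:finite-sample-recovery-appendix} almost verbatim, replacing the teacher-agnostic augmentation by the teacher-aware one of Definition~\ref{def:augmentation}(b) and tracking how this sharpens the relevant constants. The entire improvement rests on a single geometric fact: in $\aug(D, \cW^*)$ each point $\vx \in D\cap I_{\vw^*_j}(\epsilon)$ is displaced along $\tilde\vw^*_j$ itself rather than along the $d$ axis directions, so the perturbation meets the teacher hyperplane with inner product exactly $1$ (since $\|\tilde\vw^*_j\|=1$) instead of only $1/\sqrt d$ by pigeonhole. This is what lets me run Lemma~\ref{lemma:escape} with a \emph{constant} margin $\gamma$ rather than $\gamma = 2\sqrt d$, and that constant margin is precisely what removes the $\sqrt d$ factor from both the sample complexity and the gradient threshold.

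Concretely, first I set $\epsilon' = \epsilon / cK^{3/2}$ (note the absence of the $\sqrt d$ that appeared in the agnostic case) and fix $\gamma = 2$. For a teacher node $j$ I restrict to the slice $D\cap I_j(\epsilon')$; Assumption~\ref{assumption:main-appendix}(b) gives $N_D[I_j(\epsilon')] \ge \tau\epsilon' N_D$, and the observation hypothesis (1) gives $N_D[I_j(\epsilon')\cap E_k] \ge \kappa N_D[I_j(\epsilon')]$. With $N$ of the stated order $\Theta(cK^{5/2}d\tau^{-1}\epsilon^{-1})$ (the $\kappa^{-1}$ handling being identical to Theorem~\ref{thm:finite-sample-recovery-appendix}) and $\gamma$ constant, the observed slice meets the size requirement $N_D[I_j(\epsilon')\cap E_k] \ge (\gamma+3)K(d+1)$ of Lemma~\ref{lemma:escape}; here the only $d$-dependence is the $(d+1)$ term of the escape lemma, and because $\gamma$ is now $O(1)$ rather than $\Theta(\sqrt d)$ a full factor of $d$ is saved relative to the agnostic case. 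Applying Lemma~\ref{lemma:escape} yields the usual dichotomy: either some weight $k'$ is aligned with $j$ within $(M_1\epsilon', M_2\epsilon')$ — which, after choosing the constant $c$ (depending on the $(\eta,\mu)$ constants of $D$) large enough that $M_1\epsilon'\le\epsilon$ and $M_2\epsilon'\le\epsilon$, exactly as in Theorem~\ref{thm:finite-sample-recovery-appendix}, gives $\sin\tilde\theta_{jk'}\le\epsilon$ and $|b^*_j-b_{k'}|\le\epsilon$ — or there exists $\vx_j \in I_j(\epsilon')\setminus\bigcup_{j'\neq j} I_{j'}(2\epsilon')$. In the first case Assumption~\ref{assumption:main-appendix}(a) forbids two teachers from being $\epsilon_0$-aligned, so $k'$ must be a student node and we are done.

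In the second case I invoke Lemma~\ref{lemma:gradient-large} with $\vx_j$ as base point. Since $\epsilon'\le\epsilon$ we have $\vx_j \in I_j(\epsilon')\subseteq I_{\vw^*_j}(\epsilon)$, so $\vx_j$ lies in the augmented band and its copies $\vx_j^\pm = \vx_j \pm 2\epsilon'\tilde\vw^*_j$ are already present in $D' = \aug(D, \cW^*)$; I set $\tilde\ve = 2\epsilon'\tilde\vw^*_j$, a displacement with zero bias component. The key fact gives $\vw_j^{*\t}\tilde\ve = 2\epsilon'$ exactly, so I take $\epsilon_l = \epsilon'$, $\epsilon_0 = 2\epsilon'$, $\epsilon_h = 2\epsilon' = \gamma\epsilon'$, while for every competing node $j'$ the requirement $|\vw_{j'}^\t\tilde\ve|\le\epsilon_h$ holds by Cauchy--Schwarz, $|\tilde\vw_{j'}^\t\tilde\ve|\le\|\tilde\ve\| = 2\epsilon'$. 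Reading $h$ as the $k$-th coordinate of $\vg_1(\vx,\hW)$ with $c_j = \alpha_{kj}$ via the recursive gradient rule (Lemma~\ref{thm:same-layer-relationship}), the lemma forces $|h(\vx)| > \tfrac{|\alpha_{kj}|}{5}(\epsilon_0-\epsilon_l) = \tfrac{|\alpha_{kj}|}{5}\epsilon' = \tfrac{\alpha_{kj}}{5cK^{3/2}}\epsilon$ at one of $\vx_j, \vx_j^\pm \in D'$, contradicting the gradient hypothesis (2). Hence Case 2 is impossible and the alignment of Case 1 occurs.

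The main obstacle I anticipate is bookkeeping rather than a new idea: I must verify that every augmented point demanded by Lemma~\ref{lemma:gradient-large} is genuinely produced by the teacher-aware augmentation — this is exactly why the band in Definition~\ref{def:augmentation}(b) is taken at radius $\epsilon$ and not $\epsilon'$ — and I must confirm that making $\gamma$ constant does not break the counting estimate $\sum_{j'\neq j} N_D[I_{j'}(\gamma\epsilon')] < N_D$ inside Lemma~\ref{lemma:escape}. With $\gamma = 2$ the bound reads $\tfrac{1+\gamma}{3+\gamma}\tfrac{K-1}{K} + \tfrac{1}{3+\gamma} = \tfrac{3}{5}\tfrac{K-1}{K} + \tfrac15 < 1$, so the slice still contains an uncovered point and no slack is lost. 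The only genuinely delicate calculation is pinning down the absolute constant $c$ simultaneously against $\eta$ and $\mu$ of $D$ so that both $M_1\epsilon'\le\epsilon$ and $M_2\epsilon'\le\epsilon$; this mirrors the agnostic proof but without its $\sqrt d$ amplification, and it closes the argument.
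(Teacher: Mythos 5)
Your proposal is correct and follows essentially the same route as the paper's own proof: set $\epsilon' = \epsilon/cK^{3/2}$, apply Lemma~\ref{lemma:escape} with the constant margin $\gamma = 2$, and in the non-aligned case feed the teacher-aware augmented points $\vx_j \pm 2\epsilon'\tilde\vw^*_j$ into Lemma~\ref{lemma:gradient-large} with $\epsilon_l = \epsilon'$, $\epsilon_h = \epsilon_0 = 2\epsilon'$ to contradict the gradient bound. The extra bookkeeping you supply (the Cauchy--Schwarz check $|\vw_{j'}^\t\tilde\ve| \le 2\epsilon'$ for competing nodes, and the observation that the augmentation band is taken at radius $\epsilon \ge \epsilon'$ so the needed points exist in $D'$) is consistent with, and slightly more explicit than, what the paper writes.
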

\begin{proof}
Pick $\epsilon' = \epsilon / cK^{3/2}$ ($c$ defined as in Theorem~\ref{thm:finite-sample-recovery-appendix}). For dataset $D$, by Assumption~\ref{assumption:main}, we know that for $D \cap I_j(\epsilon') \cap E_k$:
\begin{equation}
    N[D \cap I_j(\epsilon') \cap E_k] = N_D[I_j(\epsilon')\cap E_k] \ge \kappa N_D[I_j(\epsilon')] \ge \kappa \tau \epsilon' N_D = \cO(Kd)
\end{equation}

Apply Lemma~\ref{lemma:escape} with $\gamma = 2$ and similarly we know that either there exists a student $k'$ so that $(j, k')$ is $\epsilon$-align (with $M_{1j} \le 1$ and $M_{2j} \le 1$), or there exists $\vx_j$ such that 
\begin{equation}
    \vx_j \in I_j(\epsilon') \backslash \cup_{j'\neq j} I_{j'}(2\epsilon').
\end{equation}
Setting $\tilde\ve = 2\epsilon'\vw^*_j$ and we know that $\vw_j^{*\t}\tilde\ve = 2\epsilon'$.  Since we have used teacher-aware augmentation, applying Lemma~\ref{lemma:gradient-large} with $\epsilon_h = \epsilon_0 =  2\epsilon'$ and $\epsilon_l = \epsilon'$, the conclusion follows. 
\end{proof}

\subsection{Theorem~\ref{thm:multi-layer}}
\begin{proof}
The proof is similar to 2-layer case (Theorem~\ref{thm:finite-sample-recovery} in the main text or Theorem~\ref{thm:finite-sample-recovery-appendix} in Appendix). The only difference is that instead of thinking about $K_1 = m_1 + n_1$ boundaries, we need to think about all the $Q$ boundaries introduced by the top-level and obtain a data point $\vx_j$ so that it is within the boundary of node $j$, but far away from all other possible boundaries:  
\begin{equation}
    \vx_j \in I_j(\epsilon') \backslash \cup_{j'\neq j} I_{j'}(2\sqrt{d}\epsilon').
\end{equation}
Where $j'$ includes all the boundaries induced. This could be exponential. Note that for each intermediate node $j'$, its boundary $\vw_{j'}\vf = 0$ will be ``bent'' whenever the underlying feature $\vf$, which is the output of a set of ReLU / Leaky ReLU nodes, has shifted their activation patterns. 
\end{proof}

\begin{figure}
    \centering
    \includegraphics[width=0.8\textwidth]{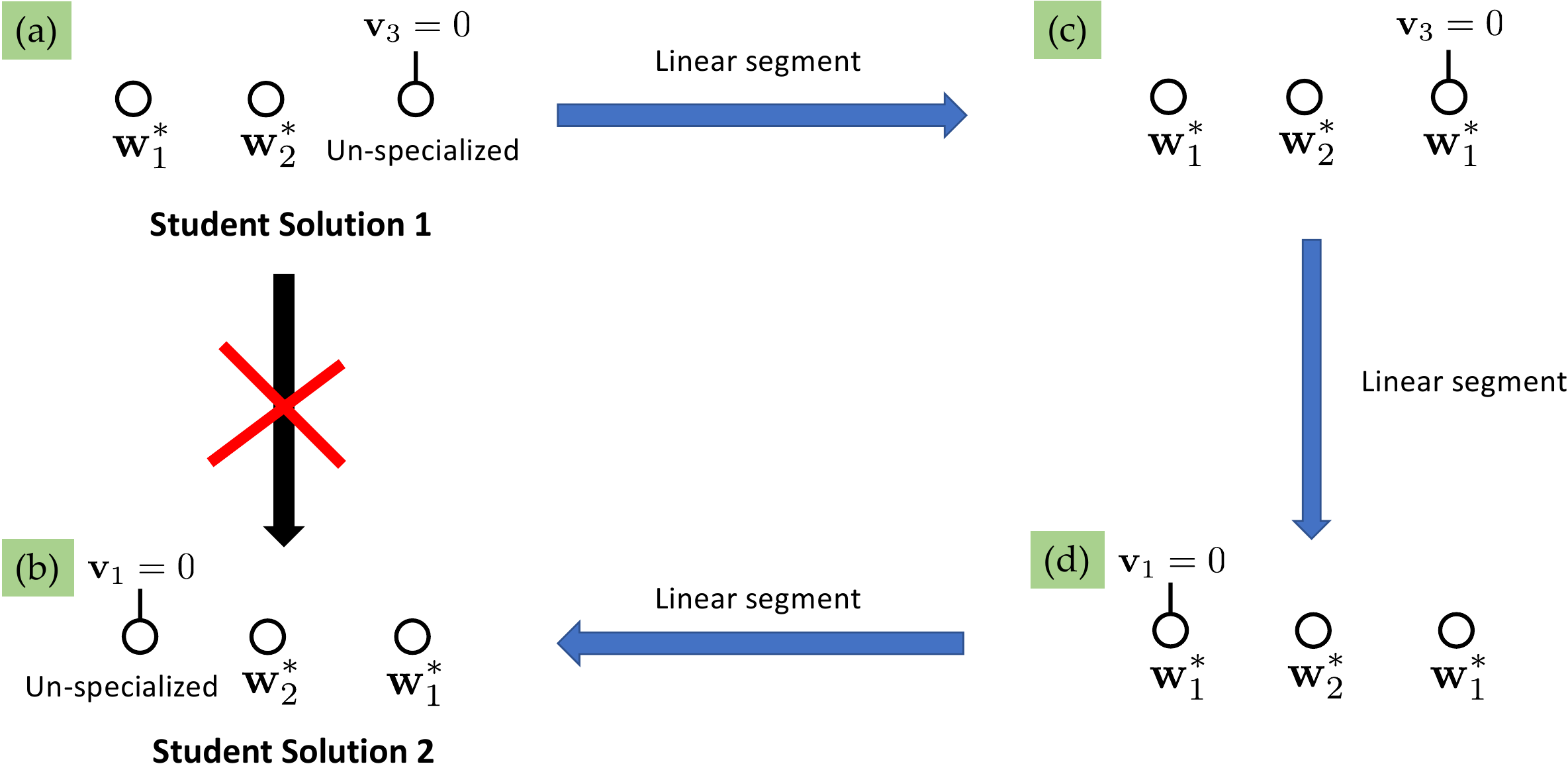}
    \caption{A piece-wise linear curve between two lost-cost student solutions.}
    \label{fig:connectivity}
\end{figure}

\section{Connectivity}
\label{sec:connectivity}
We construct a piece-wise linear curve from two low-cost student solutions as in Fig.~\ref{fig:connectivity}. Consider two student networks with 3 hidden nodes trained with the same teacher with 2 nodes. Once they converge, assuming their gradients are zero, there could be many different ways of specializations that satisfy Theorem~\ref{thm:2-layer}. 

Fig.~\ref{fig:connectivity}(a) and (b) show two such specializations $\cW^{(1)}$ and $\cW^{(2)}$. For $\cW^{(1)}$, $\vw^{(1)}_1 = \vw^*_1$, $\vw^{(1)}_2 = \vw^*_2$ and $\vw^{(1)}_3$ is an un-specialized node whose fan-out weights are zero ($\vv^{(1)}_3 = \vzero{}$). For $\cW^{(2)}$, $\vw^{(2)}_3 = \vw^*_1$, $\vw^{(2)}_2 = \vw^*_2$ and $\vw^{(2)}_1$ is an un-specialized node whose fan-out weights are zero ($\vv^{(2)}_1 = \vzero{}$). 

If we directly connect these two solutions using a straight line, the intermediate solution will be no longer low-cost since a linear combination $\lambda \vw^{(1)}_1 + (1-\lambda) \vw^{(2)}_1 = \lambda \vw^*_1 + (1-\lambda) \vw^{(2)}_1$ can be a random (un-specialized) vector, and its corresponding fan-out weights $\lambda \vv^{(1)}_1 + (1-\lambda) \vv^{(2)}_1 = \lambda \vv^{(1)}_1$ is also non-zero. This yields a high-cost solution. 

On the other hand, if we take a piece-wise linear path (a)-(c)-(d)-(b), then each line segment will have low-cost and we move $\vw^*_1$ from node 1 to node 3. We list the line segment construction as follows:
\begin{itemize}
    \item Start from $\cW^{(1)}$.
    \item \textbf{(a)-(c)}. Blend $\vw^*_1$ into an un-specialized weight: $\vw_3(t) = (1-t)\vw^{(1)}_3 + t \vw^*_1$. This won't change the output since $\vv_3^{(1)} = \vzero{}$. 
    \item \textbf{(c)-(d)}. Move $\vv^{(1)}_1$ from node 1 to node 3: 
    \begin{eqnarray}
        \vv_3(t) &=& (1-t) \vv_3^{(1)} + t \vv^{(1)}_1 = t \vv^{(1)}_1 \\
        \vv_1(t) &=& (1-t) \vv^{(1)}_1 + t \vv_3^{(1)} = (1 - t) \vv^{(1)}_1
    \end{eqnarray} 
    This won't change the output since $\vv_1(t) + \vv_3(t) = \vv^{(1)}_1$ and their weights are both $\vw^*_1$.
    \item \textbf{(d)-(b)}. Change $\vw_1$ to be the unspecified weight in $\cW^{(2)}$: $\vw_1(t) = (1 - t)\vw_1^* + t \vw^{(1)}_2$. This won't change the output since now $\vv_1 = \vzero{}$.
    \item Arrive at $\cW^{(2)}$.
\end{itemize}

\section{Empirical results}
\label{sec:teacher-construction}
We construct teacher networks in the following manner. For two-layered network, the output dimension $C = 50$ and input dimension $d = m_0 = n_0 = 100$. For multi-layered network, we use 50-75-100-125 (i.e, $m_1 = 50, m_2 = 75, m_3 = 100, m_4 = 125$, $L = 5$, $d=m_0=n_0=100$ and $C=m_5=n_5=50$). The teacher network is constructed to satisfy Assumption~\ref{assumption:main}: at each layer, teacher filters are distinct from each other and their bias is set so that $\sim 50\%$ of the input data activate the nodes, maximizing the number of samples near the boundary. 

We generate the input distribution using $\cN(0, \sigma^2 I)$ with $\sigma = 10$. For all 2-layer experiments, we sample $10000$ as training and another $10000$ as evaluation. We also tried other distribution (e.g., uniform distribution $U[-1, 1]$), and the results are similar. 

\section{Code Release} 
We have attached all Python codes in our supplementary material submission. 

\fi

\end{document}